\documentclass{article}

\usepackage[preprint, nonatbib]{neurips_2020}

\usepackage[utf8]{inputenc} 
\usepackage[T1]{fontenc}    
\usepackage{hyperref}       
\usepackage{url}            
\usepackage{booktabs}       
\usepackage{amsfonts}       

\usepackage{nicefrac}       
\usepackage{microtype}      
\usepackage{xcolor}

\usepackage{graphicx}
\graphicspath{ {./} }

\usepackage{amsthm}
\usepackage{amsmath}
\usepackage{caption}
\usepackage{makecell}

\newcommand{\eq}[1]{eq. \ref{eq:#1}}
\newcommand{\Eq}[1]{Eq. \ref{eq:#1}}

\newcommand{\sect}[1]{section \ref{sec:#1}}
\newcommand{\Sect}[1]{Section \ref{sec:#1}}
\newcommand{\sects}[2]{sections \ref{sec:#1} and \ref{sec:#2}}

\newcommand{\appendixSect}[1]{Supplement \ref{sec:#1}}

\newcommand{\Cra}[1]{Corollary \ref{corollary:#1}}

\newcommand{\Lma}[1]{Lemma \ref{lemma:#1}}

\newcommand{\Thm}[1]{Theorem \ref{theorem:#1}}

\newcommand{\Fig}[1]{Fig.~\ref{fig:#1}}

\newcommand{\Table}[1]{Table~\ref{tbl:#1}}

\newtheorem{theorem}{Theorem}
\newtheorem{lemma}{Lemma}
\newtheorem{corollary}{Corollary}

\newcommand{\lowerscript}[1]{\text{\usefont{U}{BOONDOX-cal}{m}{n}{#1}}}

\title{Deep probabilistic model synthesis enables unified modeling of whole-brain neural activity across individual subjects}

\author{%
  William E.~Bishop$^{1, *}$ \\
  \And
   Luuk W. Hesselink$^{2}$ \\
  \And
  Bernhard Englitz$^{2}$ \\
  \And
  Misha B. Ahrens$^{1 \dag}$ \\
  \And
  James E. Fitzgerald$^{1, 3, 4 \dag}$ \\
}

\begin{document}


\maketitle

$^1$Janelia Research Campus, Howard Hughes Medical Institute, Ashburn, VA;  $^2$Computational Neuroscience Lab,  Donders Center for Neuroscience, Radboud University, Nijmegen, The Netherlands; $^3$Departments of Neurobiology, Engineering Sciences and Applied Mathematics, and Physics and Astronomy, Northwestern University, Evanston, IL;  $^4$ NSF-Simons National Institute for Theory and Mathematics in Biology, Chicago, IL;  $^\dag$These authors contributed equally to this work; $^*$Now at Google Deepmind. To whom correspondence may be addressed: willbishop.neuro@gmail.com, ahrensm@janelia.hhmi.org, james.fitzgerald@northwestern.edu.

\begin{abstract}

Many disciplines need quantitative models that synthesize experimental data across multiple instances of the same general system. For example, neuroscientists must combine data from the brains of many individual animals to understand the species' brain in general. However, typical machine learning models treat one system instance at a time. Here we introduce a machine learning framework, deep probabilistic model synthesis (DPMS), that leverages system properties auxiliary to the model to combine data across system instances. DPMS specifically uses variational inference to learn a conditional prior distribution and instance-specific posterior distributions over model parameters that respectively tie together the system instances and capture their unique structure. DPMS can synthesize a wide variety of model classes, such as those for regression, classification, and dimensionality reduction, and we demonstrate its ability to improve upon single-instance models on synthetic data and whole-brain neural activity data from larval zebrafish.
\end{abstract}


\section{Introduction}

Researchers across many disciplines still struggle to understand complex systems \cite{newman2011complex}. A common challenge is that there is often no single canonical system to study, but instead many \emph{instances} of the same general type of system, each with its own variability. For example, biologists need data from many unique ecosystems to study organism-environment interactions, economists seek to predict price fluctuations across many distinct international markets, and physicists study many different materials to discover the general principles governing condensed matter systems. 

In this work, we consider the problem of fitting data-driven models of complex systems using measurements obtained from individual, unique instances of the same type of system. We refer to this problem as \emph{model synthesis}. We focus on scenarios where we seek to learn quantitative models from scratch, and where standard machine learning approaches are limited to fitting single system instances at a time. For example, the brains of distinct individuals each contain a unique set of neurons with varying trial-to-trial spontaneous activity and behavior, making direct correspondence across brains impossible \cite{Stringer2018, mu2019glia}. Standard single-neuron-resolution decoding models \cite{degenhart2020}, neural network models \cite{pillow2008}, functional clustering models \cite{chen2025whole}, and dimensionality-reduction models \cite{stringer2019high} must therefore be fit to one system instance at a time. This limits both generalizability, because we are describing single system instances and not the general system, and model detail, because each system provides only limited data.

Our approach to model synthesis predicts model parameters for each system instance from system properties that are auxiliary to the model. For example, a neuron's role in generating behavior varies depending on its cell type and brain region  \cite{kato2015global, ohyama2015multilevel, naumann2016whole}, and the weight parameters of decoding models might be similarly predictable from the genetic identities, functional fingerprints, and/or locations of neurons in the brain. Since available system properties can typically only predict model parameters imperfectly, we specifically focus on the more general problem of \emph{probabilistic model synthesis}. Probabilistic model synthesis learns a conditional prior distribution (CPD) that ties together the system instances, as well as posterior distributions that account for the observed data and unique structure of each instance. 

We implement probabilistic model synthesis using general and flexible methods from variational inference \cite{blei2017}. We refer to our resulting implementation as \emph{deep probabilistic model synthesis} (DPMS) as the models, priors, and posteriors utilize deep neural networks. We show theoretically that DPMS can identify common structures linking properties and model parameters, while preserving individual variability, and we demonstrate DPMS' use in practice by applying it to both synthetic and real data. Our real-world applications focus on neuroscience, where we demonstrate that it improves whole-brain regression and dimensionality reduction models for larval zebrafish behaving in virtual reality environments \cite{ahrens2013, vladimirov2014, chen2018}. Neuroscience is a natural use case for DPMS as understanding the brain requires the synthesis of results from more experiments than can conceivably be performed in any individual animal \cite{biswas2020}. 

\section{Results}

\subsection{Theoretical framework}
\label{sec:DPMS}

We introduce probabilistic model synthesis in the context of input-output models such as classification and regression models (see Methods for more details and other model types, such as those for dimensionality reduction). We aim to fit parameterized models to each of $S$ system instances indexed by $s \in {1, \ldots, S}$. For each system instance $s$, we denote the probability of the observed input-output data under the parameterized model as $p(Y^s| X^s, \theta^s)$, where the dimensionalities of the data $(X^s, Y^s)$ and model parameters $\theta^s$ may vary across system instances (Methods \ref{sec:cpd}). In a neuroscience setting, $\theta^s$ could describe how the activities of different neurons $X^s$ combine to drive continuous or categorical descriptions of behavior $Y^s$ (\Fig{concept}a). Probabilistic model synthesis predicts the model parameters from auxiliary properties of the system, $M^s$, by learning a \emph{conditional prior distribution} (CPD), $p_\gamma(\theta^s | M^s)$, that encodes beliefs about model parameters prior to observing $(X^s, Y^s)$ (\Fig{concept}a). This prior, parameterized by $\gamma$, therefore models the relationship between properties, $M^s$, and model parameters, $\theta^s$. In the neuroscience example, $M^s$ could encode each neuron's spatial coordinates or genetic profile, since these properties partially predict a neuron's role in neural processing and behavior, and $\gamma$ could be a high-dimensional vector designed to parametrizes a flexible class of probability distributions.

One could in principle learn the CPD by finding the $\gamma$ that maximizes the conditional likelihood of the observed data for all system instances, 
\begin{align}
 p_\gamma(\{Y^s\}_{s=1}^S|\{X^s, M^s\}_{s=1}^S) =  \prod_{s=1}^S \int p(Y^s|X^s, \theta^s)p_\gamma(\theta^s|M^s)d\theta^s.
\label{eq:cond_ll}
\end{align}
Applying Bayes rule then yields data-dependent posteriors for each system instance,
\begin{align}
p_\gamma(\theta^s| Y^s, X^s, M^s) = \frac{p(Y^s|X^s, \theta ^s)p_\gamma(\theta^s | M^s)}{\int p(Y^s|X^s, \theta ^s)p_\gamma(\theta^s | M^s) d\theta^s}.
\label{eq:exact_posterior}
\end{align}
However, evaluating these integrals is typically computationally infeasible. 

\begin{figure}[ht!]
\centering
\includegraphics[width=5.5in]{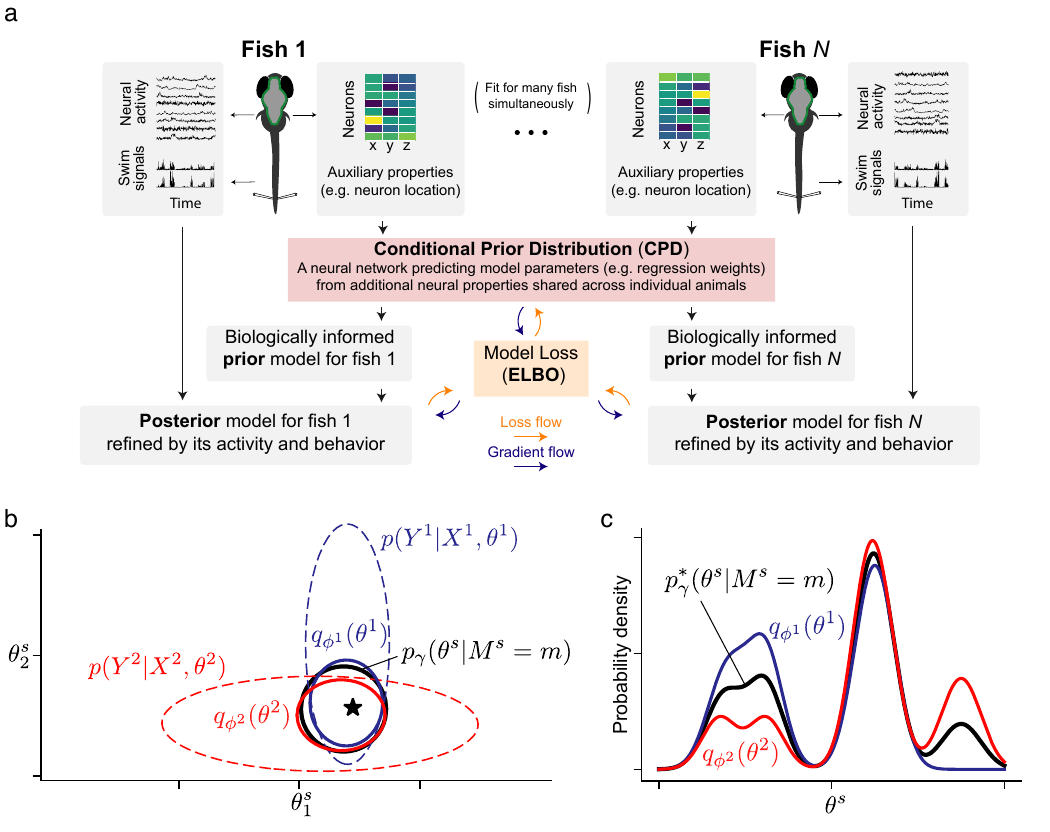}
\caption{\emph{Probabilistic Model Synthesis enables structure learned from one system instance to be transferred to another.} (\emph{\textbf{a}}) DPMS synthesizes models across system instances (e.g., individual animals) by predicting model parameters for each instance from auxiliary properties (e.g., neuron positions) through a shared Conditional Prior Distribution (CPD). The CPD, generally implemented as a deep neural network, models a distribution over model parameters conditioned on these auxiliary properties. This biologically-informed prior for each system instance is then refined using data (e.g., neural activity and behavior) collected from that instance, yielding system-specific posteriors. Both the CPD and the model posteriors are optimized using the data from all individual animals through the Evidence Lower-Bound (ELBO). This enables the CPD to capture a \textit{common} mapping from a neuron’s properties to its role in shaping dynamics and behavior, while the posteriors account for \textit{individual} variability. (\emph{\textbf{b}}) Illustration of probabilistic model synthesis in a scenario where two system instances (blue and red) have the same measurable properties and true model parameters (black star). All probability distributions are assumed to be multivariate normal, with level sets plotted as ellipses. We suppose that data are collected from the two system instances under different conditions, such that the data only constrain one of the two model parameters well for each system (blue and red dashed lines). Learning synthesizes information so that both parameters become well constrained in the CPD (black solid line) and optimal approximate posteriors (blue and red solid lines). Note that each approximate posterior is tighter than the CPD along the dimension well constrained by the data. (\emph{\textbf{c}}) As described in the text, the optimal CPD (black) is the average of approximate posteriors (red and blue) for system instances with the same measurable properties. Here, $\theta^s$ is one-dimensional, and we show the approximate posteriors (red and blue) for two system instances with measurable properties equal to $m$, and the optimal CPD conditioned on $m$ (black). 
}
\label{fig:concept}
\end{figure}

Deep probabilistic model synthesis (DPMS) provides a general and tractable approach using variational inference (\Fig{concept}a) \cite{jordan1999, blei2017}. DPMS specifically introduces approximate posteriors, $q_{\phi^s}(\theta^s)$, parameterized by $\phi^s$ for each system instance, and jointly optimizes $\{\phi^s\}_{s=1}^S$ and $\gamma$ to maximize 
\begin{align}
        \mathcal{L}(\{\phi^s\}_{s=1}^S, \gamma) = \sum_{s=1}^S \mathbb{E}_{q_{\phi^s}(\theta^s)}\left[ \log p(Y^s|X^s, \theta^s) \right] - \text{KL}\left[q_{\phi^s}(\theta^s) || p_\gamma(\theta^s|M^s) \right]\label{eq:ELBO}
\end{align}
which maximizes the log probability of the data and minimizes the mean Kullback-Leibler (KL) divergence from $q_{\phi^s}(\theta^s)$ to $p_\gamma(\theta^s| Y^s, X^s, M^s)$ (Supplement \ref{sec:ELBO_all}). $\mathcal{L}$ is usually termed the evidence lower bound (ELBO) as it bounds the log-likelihood of the observed data from all system instances as
\begin{align}
        \log p_\gamma(\{Y^s\}_{s=1}^S|\{X^s, M^s\}_{s=1}^S) \geq \mathcal{L}(\{\phi^s\}_{s=1}^S, \gamma). 
\end{align}
The form of each $q_{\phi^s}$ can be very flexible, taking advantage of modern advances in probabilistic modeling (\cite{papamakarios2021}) and automatic differentiation methods (\cite{abadi2016, paszke2019}). In practice, we follow previous work and approximate the expected log-likelihood and KL divergence by sampling (Methods \ref{sec:sampling_mds}, \cite{williams1992, kingma2014, roeder2017, mnih2014}). A closely related formalism also applies to dimensionality reduction models (Methods \ref{sec:dim_red_mdls}). 

The structure of the ELBO explains how DPMS uses the CPD to synthesize models across system instances. Maximizing \Eq{ELBO} encourages posteriors to reflect a similar structure across system instances, since the CPD must remain close to each. The CPD effectively pools the data from different system instances enabling model parameters from individual systems to benefit from this learned shared structure (\Fig{concept}b). At the same time, the CPD can reflect uncertainty about model parameters arising from variability not predicted by measurable properties (Supplement \ref{sec:proofs}). In particular, the optimal CPD averages posteriors over model parameters for system instances with the identical measurable properties (\Fig{concept}c). 

Selecting the form of the CPD requires a balance between its flexibility and the ability to achieve synthesis. For example, when $M^s$ is unique for each system instance, the term $\text{KL}\left[q(\theta^s; \phi_\theta^s) || p(\theta^s|M^s; \gamma) \right]$ in \Eq{ELBO} can be trivially minimized by learning a CPD where $p(\theta^s|M^s; \gamma) = q(\theta^s; \phi_\theta^s)$ for each $s$. In this scenario, the CPD effectively memorizes a unique distribution over parameters for each system instance, failing to yield synthesis. This can be prevented by constraining the CPD or the feature space for properties in a way that limits its ability to become overly specialized for small regions of property space, for instance through discretization (Methods \ref{sec:SHBF_fcns}). 

Uncertainty estimates of the CPD, such as variance, must also be interpreted with care, as the CPD will learn point estimates of the model parameters unless there is variability in the data that cannot be predicted by auxiliary properties. This means that in data-limited regimes, the CPD may learn to predict parameters with very low variance. Indeed, when there is not enough data to detect or model variability in a system's parameters (e.g. if one only has one instance), a reasonable approach is to learn a single deterministic relationship. When additional data is observed, variability can be recognized and will be represented as variability over model parameters in the CPD. We explore this behavior in detail for a simple but extreme example of synthesizing linear models when only one sample is observed from each system instance in the supplement in \Fig{low_data_synth}.

In practice, it may be unnecessary or undesirable to predict all model parameters from system properties, and there may be certain parameters we desire to rigidly share across models. DPMS can accommodate this by splitting the parameters into three sets (see Methods \ref{sec:practical_concerns}). The first set of parameters, $\theta^s_\text{props}$, is the previously emphasized set of parameters that we seek to predict from the system properties. The second set of parameters, $\theta_\text{shared}$, is a set of \emph{core} parameters that are shared across system instances. For simplicity, we choose to learn point estimates for $\theta_\text{shared}$ in this work. The final set of parameters, $\theta^s_\text{no-props}$, are those that can vary across system instances but that we do not seek to predict from the system's properties. 
We learn non-conditional priors and form approximate posteriors over these parameters for each example system. In this work, we use the same priors across system instances, requiring the dimensionality of $\theta^s_\text{no-props}$ to be identical across system instances. 

\subsection{Illustrating DPMS through a synthetic example}
\label{sec:syn_example}

We first illustrate key features of DPMS using a synthetic scenario where the ground-truth model structure is known. We generated system instances matching the model structure later used to describe brain-wide neural influences on behavior and refer to each instance as the simulated brain of an individual. Conceptually, each simulated brain is composed of two components, a linear projection to a shared low-dimensional space and a subsequent non-linear mapping to behavior (\Fig{syn_results}a). The linear projection varies across individuals, whereas the non-linear mapping is shared. In the context of neuroscience, these two components might represent how individual brains with low-level differences, such as the number of neurons and their connectivity, can implement the same high-level algorithm for driving behavior \cite{churchland2012, mante2013, gallego2017}. We generated 100 simulated brains, sampling the number of neurons $d_s^x$ uniformly in $[10^4, 1.1\times10^4]$ and their positions uniformly in the unit square. Projection weights $\omega^s[i]$ were drawn from $\mathcal{N}(\mu(M^s[i,:]), \sigma^2(M^s[i,:]))$ for randomly generated functions $\mu$ and $\sigma$, where $\omega^s[i]$ is the projection weight of neuron $i$ in system $s$ and $M^s[i,:]$ is its 3D position (\Fig{syn_results}b). This generative process defines the ground-truth CPD. See Methods \ref{sec:syn_mdl_fitting} and Supplement \ref{sec:simulated_ex_details} for full simulation details.

\begin{figure}[h]
\centering
\includegraphics[width=5.0in]{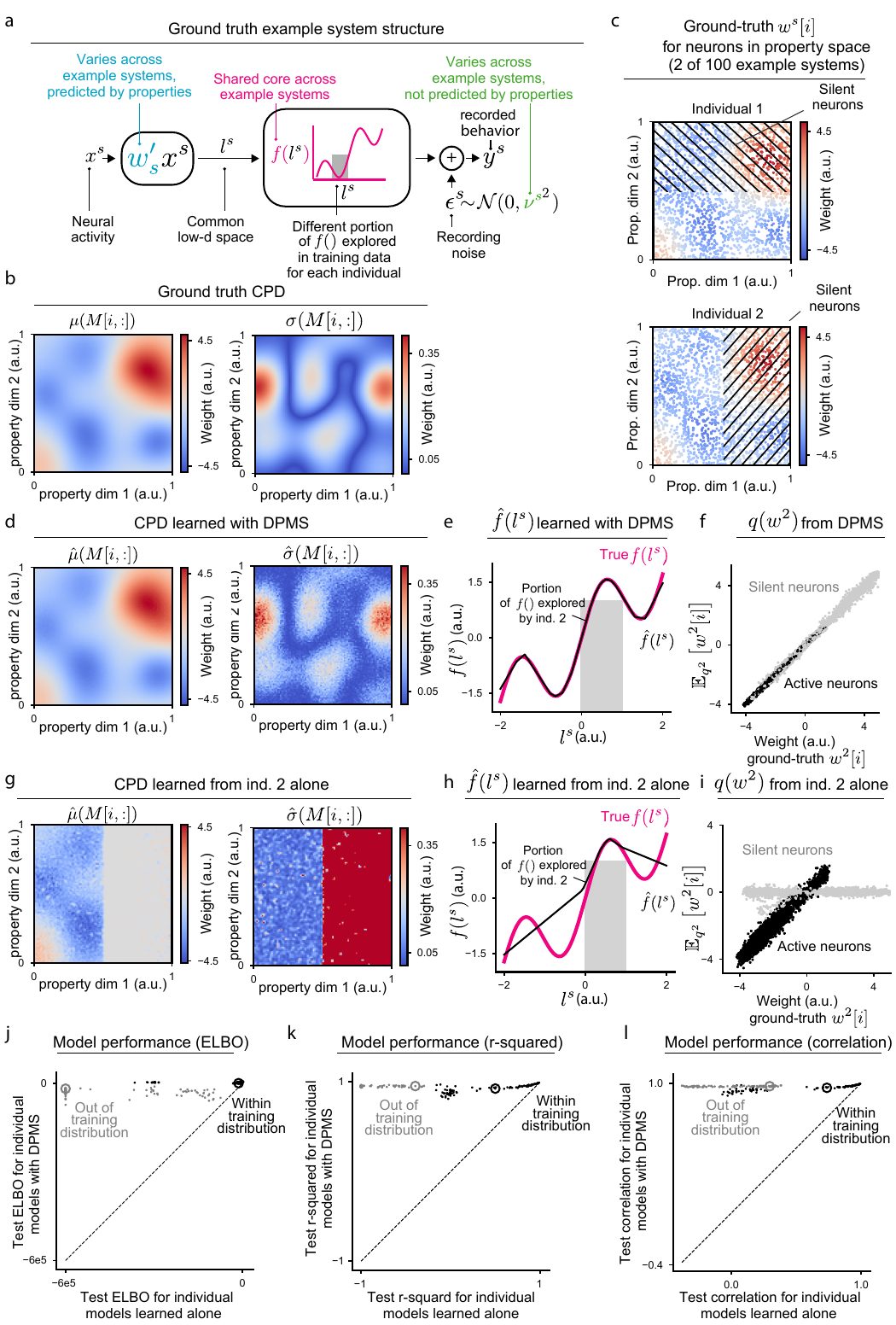}
\caption{Caption on next page.}
\label{fig:syn_results}
\end{figure}

\begin{figure}[h]
\ContinuedFloat
\caption{\emph{DPMS applied to a simulated scenario.} (\emph{\textbf{a}}) The structure of the ground-truth simulated brain models generated for each individual. The neuronal activity, $x^s$, is projected into a 1D subspace to form $l^s$ representing the conserved computational quantity the model brain uses to drive behavior. Projection weights vary across neurons and system instances in a way that depends on properties (see panel \emph{b}). A function, $f$, that is shared across individuals and represents the high-level algorithm the brain uses to drive behavior transforms $l^s$ into behavior. Recorded behavior, $y^s$, is formed by adding recording noise to $f(l^s)$ with a standard deviation $\nu^s$, selected independently from a Gamma distribution across individuals. Only a pseudo-randomly selected portion of $f$ is explored in the training data for each individual. (\emph{\textbf{b}}) The ground-truth functions, $\mu$ and $\sigma$, specifying the mean and standard deviation of the ground-truth CPD throughout the 2D property space. (\emph{\textbf{c}}) Ground-truth weights for neurons visualized in property space for two system instances. Neurons in a pseudo-randomly selected half of property space (dashed regions) are silent in the training data for each individual. For visualization, only $10\%$ of the neurons have been been shown for each individual. (\emph{\textbf{d}}) The functions $\hat{\mu}$ and $\hat{\sigma}$ learned for the CPD by DPMS. (\emph{\textbf{e}}) The true shared function, $f$, and its estimate from DPMS, $\hat{f}$, shown over the entirety of their domain. The portion of $l^s$ explored for individual 2 is denoted in gray. (\emph{\textbf{f}}) The posterior mean over weights for active and silent neurons estimated by DPMS for individual 2. Note that $w^2$ in the figure denotes weights for individual 2 and not squared weights. (\emph{\textbf{g-i}}) Same as panels  \emph{d}-\emph{f} but fit to individual 2 in isolation. For visualization purposes, estimated standard deviation values have been clipped at $0.40$ (\emph{\textbf{j}}) Performance, quantified by ELBO, for all 100 simulated individuals when models were synthesized with DPMS or fit to each individual in isolation. Points show individual systems; large circles denote individual 2.  Across individual systems, the mean ± standard error of the ELBO for DPMS (individual fits) was 3720 ± 243 (-35,730 ± 8,984) when evaluated within domain and -29,120 ± 1,139 (-424,400 ± 21,080) when evaluated out of domain. (\emph{\textbf{k,l}}) Same as panel \emph{j}, with performance quantified by R-squared or by the correlation between predicted and recorded $y$ values. Mean R-squared across individuals was 0.929 ± 0.005 (0.480 ± 0.043) in domain and 0.950 ± 0.001 (-0.859 ± 0.021) out of domain.  Mean correlation was 0.965 ± 0.002 (0.585 ± 0.041) in domain and 0.975 ± 0.0004 (0.047 ± 0.025) out of domain. R-squared values were clipped at $-1$. Out-of-distribution R-squared and correlation can exceed in-distribution values because the out-of-distribution evaluation spans the full range of $f$, whereas each individual's in-distribution data covers only a limited portion.}
\end{figure}

To demonstrate DPMS' power, we generated training data with three limitations. First, the number of time points we sampled for each individual, $n^s$, was drawn uniformly from $[7500, 9000]$, yielding less samples for model fitting than the number of model parameters. Second, we simulated variable recording conditions, where neurons in half of the brain were inactive for each individual (\Fig{syn_results}c) making half of the projection weights $\omega^s$ unidentifiable from any single individual. Third, we assumed that variable recording conditions also drove a limited range of behavior. In particular, although the domain of $f$ was $(-2, 2)$, for each individual we generated activity for the non-silent neurons to ensure that $x^s_t$ only produced $l^s_t$ within an interval of length 1 (\Fig{syn_results}a, gray region in plot of $f$), preventing the data from any individual representing the full shape of $f$.

Despite these challenges, DPMS accurately recovered the ground truth model structure (\Fig{syn_results}d,e). The learned approximate posteriors accurately predicted the weights for all neurons (\Fig{syn_results}f), including those silent in a given individual's data. As expected, single-individual data constrained estimates only within the behavioral and activity ranges that specific individual exhibited (\Fig{syn_results}g-i). DPMS' synthetic properties can thus enable predictions that would otherwise be impossible. 

Finally, we evaluated the synthesized models on test data. Using posterior means to estimate parameters, we assessed performance on test datasets both within and outside each individual’s training distribution (Methods \ref{sec:syn_mdl_fitting}). Models fit with DPMS performed well both within and outside of the training distribution (\Fig{syn_results}j-l). In contrast, models estimated from single individuals performed poorly on out-of-distribution data and some even on within-distribution test-data.

\subsection{Synthesizing regression models for decoding behavior from neural population activity}
\label{sec:regResults}

In the next two sections, we demonstrate the utility of DPMS for neuroscience by using it in two different applications. Both applications use previously reported recordings of whole-brain neural activity in larval zebrafish responding to a variety of visual stimuli (Methods \ref{sec:data_preprocessing}, \cite{chen2018}).  Approximately $80,000$ individual neurons densely covering the entire brain of each fish were recorded with calcium imaging while behavior was simultaneously recorded in the form of the activity of axial motor neurons on the left and right sides of the tail (\Fig{reg_results}a,b). From the original recordings, we focused on eight animals imaged at similar frame rates displaying robust behavioral signals. We used the 3D positions of individual neurons, registered to a standard anatomical atlas \cite{randlett2015} as the properties enabling DPMS across fish. See Methods \ref{sec:app_details_reg} and Supplement \ref{sec:reg_real_data_details} for full application details. 

\begin{figure}[h]
\centering
\includegraphics[width=5.5in]{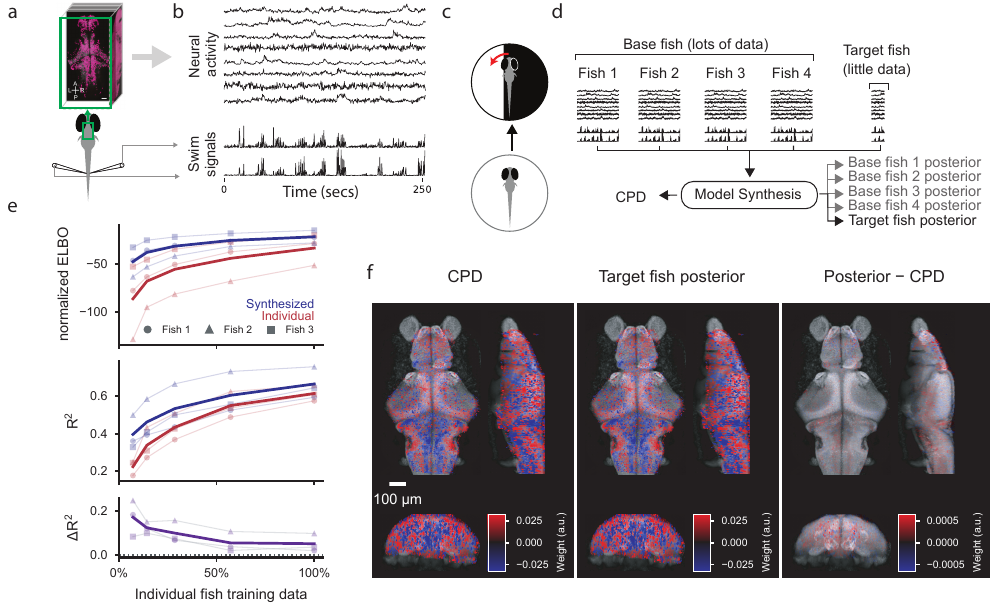}
\caption{\emph{DPMS applied to neural recordings in zebrafish larvae when synthesizing regression models  uncovers common structure across individuals.} (\emph{\textbf{a}}) The brain-wide  activity of ${\sim}80,000$ neurons per fish in a virtual reality setup was recorded along with the voltage of motor nerves on both sides of a fish's tail, which served as fictive swim signals. (\emph{\textbf{b}}) Example neural activity and fictive swim signals for one fish. (\emph{\textbf{c}}) We study the fish under phototaxis, in which they turn towards the brighter half of an arena, which alternated throughout an experiment. (\emph{\textbf{d}}) The data used for and the results of DPMS. Each base fish has more data than the target fish.  Model synthesis produces both a CPD and posteriors for each fish.  We focus on posteriors for the target fish. (\emph{\textbf{e}}, top) Cross-validated model performance as measured by the normalized ELBO, approximated from the test data. Models for the target fish were synthesized with the base fish (blue lines) or fit to the target fish alone (red lines). Results showing the average across folds for each individual target fish are shown in the light lines with same colors. Averages across fish are shown in the thicker lines. (\emph{\textbf{e}}, middle) Cross-validated prediction performance measured by the $R^2$ between the predicted and true behavioral traces, and (\emph{\textbf{e}}, bottom) the difference, $\Delta R^2$, indicating the performance gain in synthesizing models using DPMS. (\emph{\textbf{f}}) Max projections of the means of an example CPD, approximate posterior, and the difference between the two for the weights of neurons projecting to one of the dimensions of the low-d space. Weights are shown for target fish 1 when $100\%$ of the available training data in a fold was used.}
\label{fig:reg_results}
\end{figure}

We first applied DPMS to fit decoding models that predict behavior (motor activity driving swimming) from neural activity evoked by visual phototaxis stimuli (\Fig{reg_results}c). We imagine a scenario in which we are able to record a large amount of data from a \emph{base} set of fish but only limited data from a \emph{target} fish. For each target fish, we synthesized models, learned in conjunction with data from the base fish (\Fig{reg_results}d), and compared the performance of these models synthesized from data of the target fish alone.

Models for target fish synthesized with the base fish outperformed models fit to data from the target fish alone. We used the testing data for each target fish to quantify the predictive performance of each model with the normalized ELBO and the $R^2$ between the recorded swim signals and the swim signals predicted by the posterior means (\Fig{reg_results}e, top and middle). We further quantified the difference in prediction performance, $\Delta R^2$, between the synthesized model compared to the individual model (\Fig{reg_results}e, bottom). This difference indicated consistent improvement for each target fish through the application of DPMS.


Comparing CPD means and approximate posteriors learned for one target fish illustrates the results of synthesis (\Fig{reg_results}f). Weights in the posterior part of the brain indicate lateralized signals, perhaps important for predicting turning behavior \cite{chen2018}. While there are slight differences between the CPD and posterior, they correspond to a very large degree, indicating that model synthesis found a solution for the particular target fish that was in accord with the generic structure learned across fish. 

\subsection{Synthesizing dimensionality reduction models to find a shared latent space across individuals and experimental conditions}\label{sec:dimRedResults}

\begin{figure}[h]
\centering
\includegraphics[width=5.5in]{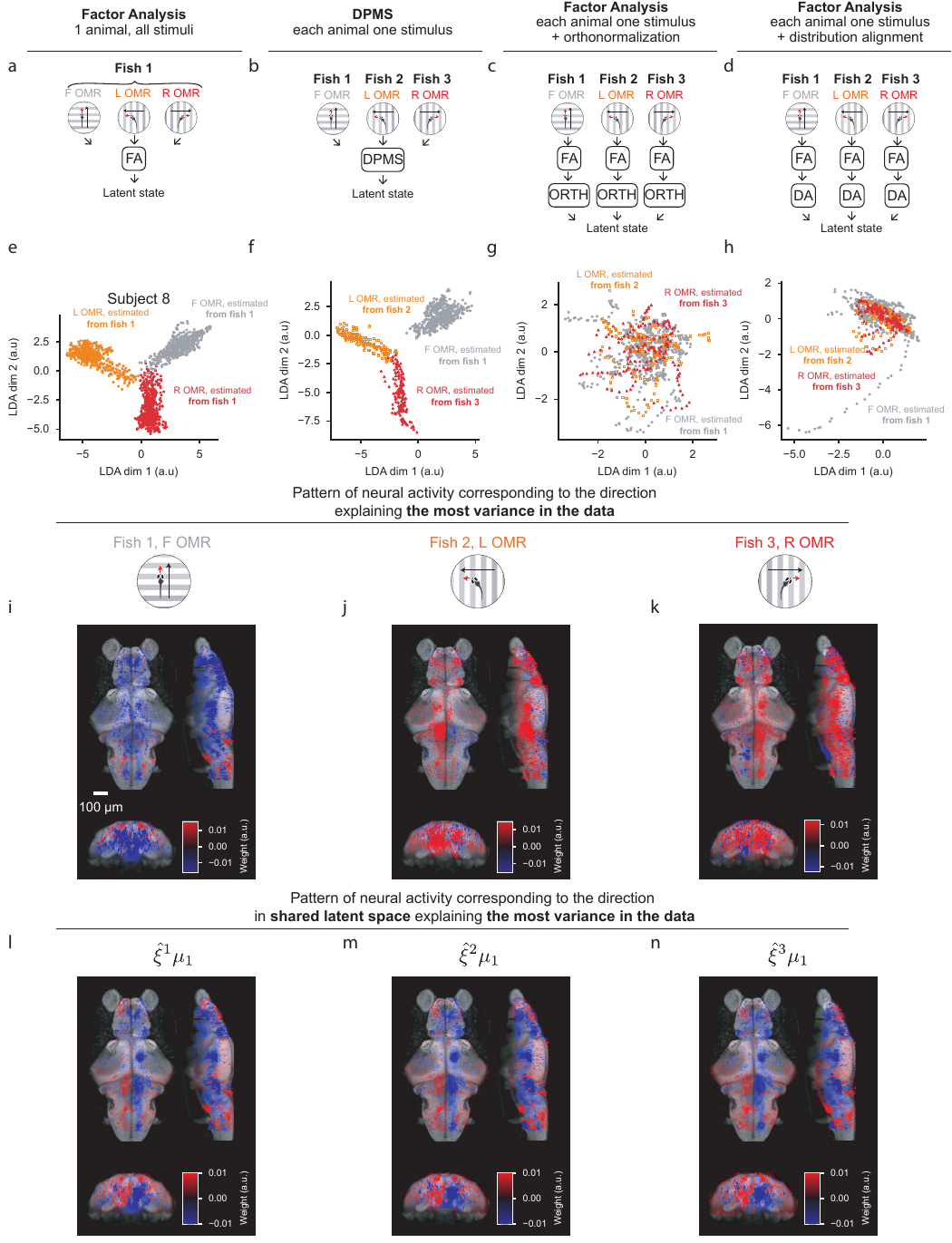}
\caption{Caption on next page.}
\label{fig:dr_lda}
\end{figure}

\begin{figure}[h]
\ContinuedFloat
\caption{\emph{DPMS synthesizing dimensionality reduction models can relate different behaviors observed in different fish in the same low-dimensional space.} (\emph{\textbf{a}}) We estimated latent state for each fish using data from all behaviors. (\emph{\textbf{b}}) DPMS was also applied to estimate latent state when only a single behavior was observed in each fish. (\emph{\textbf{c,d}}) To compare to existing methods, we applied standard factor analysis to the same data used in synthesis, fitting individual models to the data for each fish and then used orthonormalization, (c), or distribution alignment (DA, d) to attempt to put the latent state estimates across fish in the same space. (\emph{\textbf{e}}) Latent state estimated for example fish 1 when standard factor analysis was applied to data recorded under all three behaviors. Latent state is is shown in the best two-dimensional space for differentiating behavior.  (\emph{\textbf{f}}-\emph{\textbf{h}}) From left to right, latent state estimated with DPMS, applying orthonormalization to latent state estimated across fish with standard factor analysis, and applying DA to latent state estimated across fish with standard factor analysis, each when data from only a single behavior is observed in each fish. Latent state is again shown in the best two-dimensional spaces for differentiating behavior for each approach, and coordinate axes have been reflected and rotated to visually correspond to those in panel \emph{e}. (\emph{\textbf{i}}-\emph{\textbf{k}}) Patterns of neural activity explaining the most variance in the neural data observed for each fish. The most prominent pattern in the data for each fish is strongly influenced by the behavior of the fish. Shown are projections from the top, the side, and the front of the brain; longitudinal size is $\sim$ 800 $\mu$m. (\emph{\textbf{l}}-\emph{\textbf{n}}) Patterns of neural activity corresponding to the direction, $u_1$, in the shared latent space explaining the most variance across fish for each fish. The behavioral condition that fitting data for each fish was collected under is indicated at the top of each panel. The patterns identified across fish are very similar, illustrating how DPMS synthesizes models with similar mappings from latent state to observed neural activity across fish.}
\end{figure}

Our second neuroscience application used DPMS to fit dimensionality reduction models that permit high-dimensional neural activity to be represented in a low-dimensional space (Methods \ref{sec:dim_red_mdls}). Different behavioral conditions elicit distinct low-dimensional activity patterns, and capturing the diversity of all possible animal behaviors in a low-dimensional space requires the synthesis of data from more behavioral paradigms than any one animal could experience. Here we test whether DPMS can synthesize dimensionality reduction models that correctly relate neural activity for different behaviors in a shared low-dimensional space and predict the structure of brain-wide activity for all behaviors even though each fish exhibits only one behavior, a key challenge for existing methods \cite{dabagia2022}. See Methods \ref{sec:dr_app_details} and Supplement \ref{sec:add_fa_methods} for full application details. 

We fit factor analysis models to the brain-wide activity of fish performing the optomotor response (OMR), a behavior in which fish swim in the direction of moving gratings projected below them \cite{naumann2016whole,chen2018}. The fish responded to visual stimuli (whole-field gratings) moving forward, left, or right, producing three classes of behavior (typically, swims and turns in the direction of visual motion) (\Fig{dr_lda}a). However, we use data from only a single and distinct behavior for each of the three fish for model fitting(\Fig{dr_lda}b-d). 

We first asked whether models synthesized by DPMS correctly related the latent state associated with different behaviors across fish. To visualize latent structure, we applied linear discriminant analysis (LDA) to the 10-dimensional latent state variables to obtain a two-dimensional space that best separated behaviors. In the latent spaces estimated from each fish’s own data, we observed consistent clustering by behavior (fish 1 latents shown in \Fig{dr_lda}e). Furthermore, latent variables during periods without swimming lay near the center of the space, whereas those during swimming lay farther from the center (\Fig{low_data_synth_supp}a). This same structure was preserved when we applied DPMS to uncover LDA latents with each fish observing a single distinct behavior (\Fig{dr_lda}f, \Fig{low_data_synth_supp}b). Similar correspondence appeared when visualizing the three latent dimensions capturing most variance (\Fig{low_data_synth_supp}c,d). Notably, axis orthonormalization or distribution alignment (Supplement \ref{sec:add_fa_methods}, \cite{Courty2017, flamary2021pot}) failed to align latent spaces, underlining the non-triviality of these results (\Fig{dr_lda}g-h, \Fig{low_data_synth_supp}e-f).

To understand why DPMS correctly related latent states across animals, we examined the synthesized mappings from latent space to neural activity for each fish. Because each fish expresses variance dominated by different behaviors (\Fig{dr_lda}i–k), similarity across synthesized mappings would be highly non-trivial. Nevertheless, when identifying the latent-space direction that explained the most variance across all fish (Supplement \ref{sec:add_fa_methods}), we found that the synthesized FA models exhibited strikingly similar structure (\Fig{dr_lda}l–n). This dimension accounted for 28.7\% of total variance.

\begin{figure}[h]
\centering
\includegraphics[width=5.5in]
{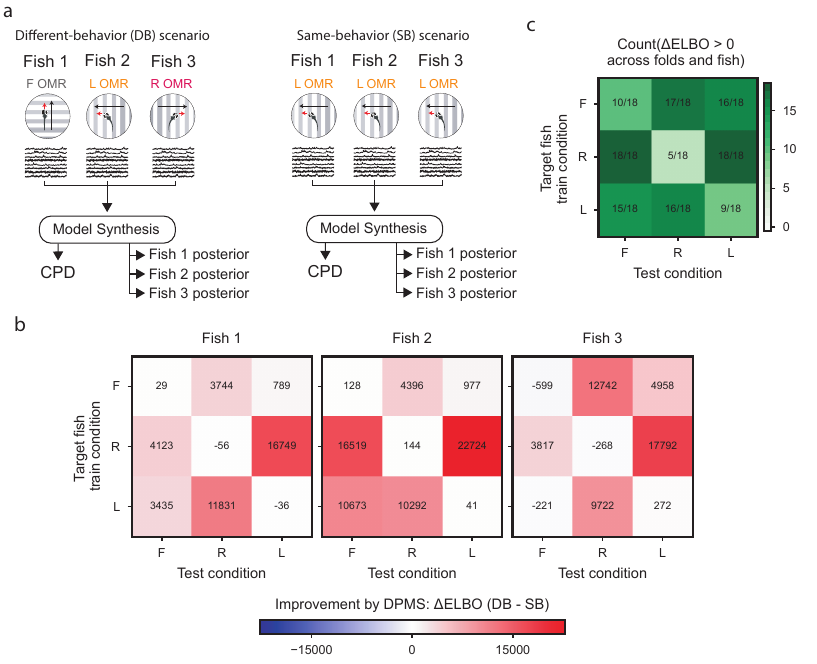}
\caption{\emph{Quantification of DPMS applied
to dimensionality reduction models.}
(\emph{\textbf{a}}) Factor-analysis models were
synthesized in two data quantity matched scenarios. In
the different-behavior (DB) scenario, non-
target fish contributed neural activity
recorded during stimulus conditions different
from the target-fish training condition (left).
In the same-behavior (SB) scenario, non-target
fish contributed data recorded during the same
stimulus condition as the target fish (right).
In both cases, DPMS learns a single CPD over
factor-analysis model parameters, together with
approximate posteriors over parameters and
latent states for each fish.
(\emph{\textbf{b}}) Improvement in held-out
ELBO for DB relative to SB models, shown
separately for each target fish after averaging
across 6 target-fish test folds. Rows
indicate the target-fish training condition and
columns indicate the test condition. 
(\emph{\textbf{c}}) Number of fish-fold comparisons in which DB models outperformed SB models for each train-test condition pair, measured by positive held-out ELBO improvement. Each entry comprises 18 comparisons (3 target fish $\times$ 6 folds). 
}
\label{fig:dr_results}
\end{figure}

We next quantified how well DPMS can synthesize
models that account for diverse behaviors
across animals. In each experiment, we
designated a \emph{target fish} and synthesized
models using data from the target fish and two
additional fish. To verify that any improvement
reflects exposure to a wider set of behaviors
rather than access to more data, we compared
two scenarios matched for the amount of
training and validation data. In the
\emph{different-behavior} (DB) scenario, the
two non-target fish provided data from
behaviors different from those of the target
fish (\Fig{dr_results}a, left). In the
\emph{same-behavior} (SB) scenario, the non-
target fish provided data from the same
behavior as the target fish (\Fig{dr_results}a,
right, see \sect{add_fa_methods} for details).

Models synthesized under the DB scenario
outperformed those synthesized under the SB
scenario when tested on behaviors
different from those used for training. This
effect was consistent across target animals
(\Fig{dr_results}b) and also when pooling results across fish
(\Fig{dr_results}c). Across fish test folds, DB models
accounted for neural activity better than SB models in 100 out of 108  folds (the off diagonal entries of \Fig{dr_results}c) when the test behavior was different than the behavior represented in the target's fish training data (exact two-sided sign-count test, Holm-corrected, p=$4.71\times10^{-21}$). Improvements when the test behavior matched that already in the training data for a target fish (the diagonal entries of \Fig{dr_results}c) were less consistent, occurring in 24 out of 54 folds, and were not significant (exact two-sided sign-count test, Holm-corrected, $p=0.50$). This suggests that the data for the target fish was abundant enough to enable fitting accurate models accounting for the aspects of neural activity present during the behavior represented in their training data.

Together, these results show that exposure to behavioral diversity across animals improves synthesis of target-fish models to held-out behaviors, and provide, to our knowledge, the first demonstration of a method that places representations of brain-wide neural activity from different individuals performing completely different behaviors into a shared low-dimensional space.

\section{Discussion}

In this work we propose a new machine learning problem - model synthesis - and a tractable approach to a probabilistic version of this problem, deep probabilistic model synthesis (DPMS). DPMS leverages variational inference to provide a flexible approach to fitting a large range of models. We analyzed this variational approach and showed how it performs synthesis by discovering common relationships between system properties and model parameters, while remaining capable of respecting variability between system instances. We further demonstrated the utility of DPMS on simulated and real examples by synthesizing models that were more accurate and complete than those learned from single individuals in an artificial scenario and on whole-brain data.

Model synthesis is a new form of transfer learning and multitask learning. Transfer learning typically seeks to improve the performance of a machine learning model on one or more target domains or tasks by leveraging data from additional domains and tasks \cite{pan2009, weiss2016}, while multitask seeks models that simultaneously perform well on all domain and task pairs in the training data \cite{caruana1997}. Model synthesis shares these goals but goes further by seeking to learn models that can ultimately perform well when modeling tasks (experimental conditions) that are never observed in the training data for a particular domain (system instance). Moreover, model synthesis makes the key assumption that each system instance is an example of the same general type of system and therefore seeks a unified model structure that is sufficient to explain the data in all domains and tasks. This makes model synthesis particularly appropriate when seeking to learn unifying principles for various biological, physical, or social systems. 

DPMS is also distinct from, yet complementary to, recent applications of large language models (LLMs) in scientific domains. For example, some recent studies leverage the semantic knowledge and language understanding of LLMs to summarize literature, critique papers, and suggest new hypotheses \cite{luo2022, liang2024, skarlinski2024}. Pretrained LLMs can also guide experimental workflows \cite{boiko2023, lu2024, gottweis2025, ifargan2025, schmidgall2025}. These are forms of synthesis at the conceptual level, but they differ from the definition of synthesis explored in this work. 
Other related works train LLM-based, domain-specific, foundation models that can be applied to a variety of downstream problems \cite{chithrananda2020, irwin2022, cui2024, brixi2025genome}. The methods we present do not require the vast amounts of data required for foundation model pretraining. Nevertheless, future work could use a foundation model for the CPD in DPMS, simultaneously leveraging the synthetic and probabilistic properties of DPMS and the power of large pre-trained models. Finally, both LLMs and DPMS are designed to aid and surpass a human researcher’s ability to survey a diverse set of experimental results and identify common explanations that can account for findings across experiments. However, both humans and LLMs synthesize information suboptimally. For example, humans tend to overestimate confidence and hallucination remains a problem for current LLMs \cite{huang2025survey}. DPMS provides a theoretical approach and formal framework for ameliorating such failure modes.

Several neuroscience methods have been proposed for incorporating data from multiple brains into data-driven models. One typical approach is to register data from multiple individual brains to a common reference brain or atlas. Structure shared across brains can then be recognized and modeled at the level or voxels, neuron types, and brain regions \cite{naumann2016whole, marques2020internal, yang2022brainstem, brezovec2023neural, brezovec2024mapping}. DPMS goes beyond these methods by allowing spatial variability across individual brains and producing cellular-resolution models. Another approach combines data from many individual animals to learn a foundation model enabling generalization across individuals and data-efficient customization \cite{vermani2024meta, wang2025foundation, kaifosh2025generic}. While powerful in extending traditional machine learning approaches to large models and datasets, they only apply to specific model classes. DPMS is specifically designed to combine data across individuals under more general circumstances.
As with DPMS, auxiliary properties can help these traditional machine learning paradigms generalize across individuals \cite{lurz2021, schneider2023learnable}; in the context of our work, this would represent a form of non-probabilistic model synthesis. 

There are multiple promising ways to extend upon the work here presented. First, we here focused on demonstrating DPMS using low-dimensional properties. However, model synthesis can be applied to higher-dimensional spaces of continuous-valued properties, and developing neural networks parameterizing CPDs appropriate for such properties would be very valuable. Second, there are additional exciting directions DPMS may enable in neuroscience. For example, the ability to apply model-driven, cellular-resolution perturbations can aid in discovery of biological system dynamics \cite{emiliani2022optogenetics}, and DPMS could help fit these models given limited experimental data. Finally, while we focused on applications in neuroscience, model synthesis can be applied broadly to other complex systems \cite{newman2011complex}, such as learning descriptions of physical dynamic systems, genetic networks, or the behavior of financial markets.


\section{Methods}

\subsection{Accommodating different dimensionalities across system instances}
\label{sec:cpd}

When performing model synthesis, the dimensionality of both model parameters, $\theta^s$, and properties, $M^s$, will generally differ across example system instances, and we must design the CPD to accommodate this.  There are multiple ways in which this could be addressed.  For example, when the CPD is parameterized by a neural network, standard mechanisms to handle variable-dimensional input and output, such as pooling, attention, and convolution, could be employed. Throughout this paper, we employ the particularly simple approach of assuming a factorized CPD. In this approach, we assume that $\theta^s\in \mathbb{R}^{d^s_\theta\times m}$ and $M^s \in \mathbb{R}^{d^s_\theta\times r}$ are matrices with a number of rows, $d^s_\theta$, that can differ, but with a fixed number of columns across system instances.  Furthermore, we assume that individual rows of $M^s$ can be used to predict the corresponding rows of $\theta^s$.  For example, in a neuroscience setting, the rows of $\theta^s$ might represent neurons providing readout weights in a model, which we seek to predict from their individual genetic properties, represented in the rows of $M^s$. Assuming conditional independence, we then introduce a factorized CPD of the form $p_\gamma(\theta^s | M^s) = \prod_{i=1}^{d^s_\theta} p_\gamma(\theta^s[i,:]| M^s[i,:] )$, where the notation $[i,:]$ indicates the $i^{th}$ row of a matrix. This simplifies the problem of learning a CPD that accommodates variable dimensionality to one of learning a conditional distribution over individual rows of $\theta^s$ given individual rows of $M^s$, both of which are of fixed dimensions. Although the conditional independence assumption inherent in this approach may only approximately hold, this is a natural decomposition that can greatly simplify the problem of forming a CPD for many forms of machine learning models, including those underlying linear regression, logistic regression, or factor analysis.

\subsection{Approximating theoretical expectations with numerical sampling}
\label{sec:sampling_mds}

For many forms for the model and the CPD, the terms in $\mathcal{L}$ will be impossible to analytically calculate. In these cases, we approximate them by sampling \cite{williams1992, kingma2014, roeder2017, mnih2014}. For example, 
\begin{align}
    \mathbb{E}_{q_{\phi^s}(\theta^s)}\left[ \log p(Y^s|X^s, \theta^s) \right] &\approx \frac{1}{N}\sum_{i=1}^N \log p(Y^s|X^s, \tilde{\theta}^s_i),  \label{eq:ll_approx} \\
    \text{KL}\left[q_{\phi^s}(\theta^s) || p(\theta^s|M^s; \gamma) \right] &\approx \frac{1}{N} \sum_{i=1}^{N}\left( \log q_{\phi^s}(\tilde{\theta}^s_i) - \log p_\gamma(\tilde{\theta}^s_i|M^s)\right), \label{eq:kl_approx}
\end{align}
\noindent where $\tilde{\theta}^s_1, \ldots, \tilde{\theta}^s_{N}$ are i.i.d. samples drawn from $q_{\phi^s}(\theta^s)$. In practice, we use $N=1$ throughout.

\subsection{Synthesizing dimensionality reduction models}
\label{sec:dim_red_mdls}

To demonstrate that DPMS can be applied to many different types of models, we now describe how it can be used to synthesize a broad class of generative models underlying a multitude of dimensionality reduction methods. We consider dimensionality reduction models for observed data of the form $Y^s \in \mathbb{R}^{n^s \times d_y^s}$, where $n^s$ is the number of observed samples and $d_y^s$ the number of observed variables for system instance $s$. For each example system, we seek to estimate the latent state, $Z^s \in \mathbb{R}^{n^s \times d_z}$, associated with each observed sample, where $d_z$ is the dimensionality of the latent state. We choose to model the latent state as inhabiting the same latent space across system instances, allowing us to compare data from each in the same low-dimensional space, so $d_z$ is the same across system instances. 

We focus on synthesizing hierarchical models specified by two model components. Many models underlying different dimensionality reduction techniques, such as factor analysis  and probabilistic principal components analysis \cite{tipping1999}, and a variety of methods incorporating latent dynamical systems (LDS) (e.g, \cite{kalman1960, rabiner2002}) take on this two-component form. The first component, $p_\lambda(Z^s)$, specifies the prior distribution over $Z^s$, where $\lambda$ is a set of optional parameters we seek to learn. For factor analysis and probabilistic principal components analysis, this prior has no learnable parameters and simply specifies that each row of $Z^s$ is independently and identically distributed according to a standard multivariate normal distribution. Alternatively, when $Y^s$ represents time series data, $p_\lambda(Z^s)$ might represent the probability of latent state trajectories and $\lambda$ would represent the parameters of the underlying LDS that we believe is shared across system instances and we seek to learn. The second model component is an observation model, and specifies the conditional probability of observed data given the latent state, which we denote as $p(Y^s| Z^s, \theta^s)$ for a system instance $s$. Here, $\theta^s$ are again parameters that can vary in dimensionality across system instances.

We apply DPMS to synthesize models of this form as follows. First,  $\lambda$, the parameters of the prior over the latent state, are shared across system instances. We do not attempt to predict these from system instance properties and instead seek to learn a single point estimate for them. This is a specific example of a general approach for fitting parameters shared across system instances we present in detail in \Sect{practical_concerns}. In contrast to the prior over the latent state, we assume that the way in which the latent state manifests itself in the observed variables can differ across system instances, and for this reason, we seek to predict $\theta^s$, the parameters of the observation model mapping latent state to observed variables, from the system instance properties by learning a CPD, $p_\gamma(\theta^s|M^s)$. We then introduce approximate posterior distributions over $\theta^s$ and $Z^s$ for each system instance, $q_{\phi^s_\theta}(\theta^s)$ and $q_{\phi^s_z}(Z^s)$, where $\phi^s_\theta$ and $\phi^s_z$ are parameters we seek to learn, and optimize the ELBO, which as we show in \appendixSect{elbo_dr_derivation} now takes the following form
\begin{align}
 \sum_{s=1}^S \mathbb{E}_{q_{\phi^s_\theta, \phi^s_z}(Z^s, \theta^s)}\left[ \log p(Y^s|Z^s, \theta^s) \right] - \text{KL}\left[q_{\phi^s_\theta}(\theta^s) || p_\gamma(\theta^s|M^s)\right] -  \text{KL}\left[q_{\phi^s_z}(Z^s) || p_\lambda(Z^s)\right], \label{eq:ELBO_DR}
\end{align}
where we define $q_{\phi^s_\theta, \phi^s_z} := q_{\phi^s_\theta}(\theta^s)q_{\phi^s_z}(Z^s)$. We note that selecting approximate posteriors that factorize over $\theta^s$ and $Z^s$ is not strictly necessary. In principle, it would also be possible to apply DPMS with more general joint posteriors capable of modeling the correlation between $\theta^s$ and $Z^s$, and this is a design choice we have made simply for convenience. \Eq{ELBO_DR} still lower-bounds the log-likelihood of the data observed from all system instances, and using the techniques outlined in \Sect{DPMS}, we can optimize it with respect to $\gamma, \lambda$ and $\{\phi_{\theta}^s, \phi_z^s\}_{s=1}^S$ to estimate the CPD and the prior over latent state and fit the approximate posteriors over model parameters and latent state for each system instance.

\subsection{Sum of hyperrectangular basis functions (SHBF) functions}
\label{sec:SHBF_fcns}

We define the form of the functions we used to predict model parameters from properties in this work. We select a form appropriate for representing functions over relatively low-dimensional domains with a mapping to output that can potentially vary drastically and non-smoothly over different local regions of the domain. We represent such functions as the sum of tiled hyperrectangular basis functions, and for this reason refer to them as SHBF functions. We define SHBF functions as follows. We assume that properties can take on real values in a bounded $n$-dimensional hyperrectangle $\mathcal{R}$.  We then define a set $r_1, \ldots, r_I$ of smaller and potentially overlapping $n$-dimensional hyperrectangles that cover $\mathcal{R}$. We associate each $r_i$ with a learnable coefficient $c_i$ and define $f: \mathbb{R}^m \rightarrow \mathbb{R}$ as
\begin{align*}
    f(x) = \sum_{i=1}^I c_i\mathbb{I}(x \in r_i),
\end{align*}
where $\mathbb{I}$ is the indicator function.  Conceptually, $f$ breaks up space into a set of potentially overlapping hyperrectangles and assigns a value for $x$ by summing the coefficients associated with the hyperrectangles $m$ falls within. 

Careful attention to the way that the hyperrectangles, $r_i$, are laid out can enable extremely efficient implementations of these functions.  Specifically, by assigning the hyperrectangles, $r_i$, so that for a given dimension of $\mathcal{R}$ they all have the same width and have leading edges spaced at fixed intervals it is possible to directly calculate which hyperrectangles, $r_i$, a point, $x$, falls within.  This means computing the exhaustive sum above can be reduced to efficiently determining which hyperrectangles a point falls in and summing the small number of coefficients associated with these. Further, when using overlapping hyperrectangles, it is possible to add padding to the range that the hyperrectangles, $r_i$, are defined over to ensure that any $x \in \mathcal{R}$ falls within the same number of hyperrectangles.  This means these functions can be efficiently implemented in a tensor based way to process multiple inputs in parallel on modern GPU hardware, and we find in practice that functions of this form can be very computationally fast to work with.

\subsection{Relating only some model parameters through system properties}
\label{sec:practical_concerns}

To aid in the presentation of the core ideas, we have presented DPMS in its purest form to this point. However, it may not always be necessary or desirable to predict all model parameters from system properties, and there may be certain parameters we desire to rigidly share across models. Indeed, we have already seen an example of this in how we handled the parameters of the prior over the latent state in \Sect{dim_red_mdls}. We now further describe how DPMS can be generalized to accommodate these concerns by splitting the parameters, $\theta^s$, for a model for an individual system instance into three sets (\Fig{models}).

The first set of parameters, $\theta^s_\text{props}$, is the set of parameters that we do seek to predict from the system properties. We learn to predict $\theta^s_\text{props}$ from measurable properties with the CPD, which we now denote as $p_\gamma(\theta^s_\text{props}|M^s)$. We introduce an approximate posterior over $\theta^s_\text{props}$ for each example system, which we refer to as $q_{\phi^s_\text{props}}(\theta^s_\text{props})$, where $\phi^s_\text{props}$ are a set of parameters that we will optimize.

The second set of parameters, $\theta_\text{shared}$, is a set of parameters that are shared and therefore take on the same value across system instances. These correspond to the parameters of the prior over latent state for dimensionality reduction models in \Sect{dim_red_mdls}. However, these can also arise if models for system instances share a ``core component.'' As we explore further in our results in the context of regression models, this core component might be a deep neural network that maps input data to a common low-dimensional space before output variables are predicted for each system instance (\Fig{models}a). For simplicity, we choose to learn point estimates for $\theta_\text{shared}$ in this work, though it would be possible to introduce learnable prior and posterior distributions over these parameters as well.

Finally, the third set of parameters, $\theta^s_\text{no-props}$, are those that can vary across system instances, but that we do not seek to predict from the system's properties. This might be because we suspect that the properties we have access to will be uninformative for these parameters. We assign non-conditional priors, potentially with their own learnable parameters, $\delta$, over these parameters, $p_{\delta}(\theta^s_\text{no-props})$. In this work, we use the same priors across system instances, represented as $p_{\delta}(\theta^s_\text{no-props})$, requiring the dimensionality of $\theta^s_\text{no-props}$ to be the same across system instances. However, this could be easily generalized to allow for priors specific to each example system, permitting the dimensionality of $\theta^s_\text{no-props}$ to vary. We form approximate posteriors over these parameters for each example system, which we denote as $q_{\phi^s_\text{no-props}}(\theta^s_\text{no-props})$, where $\phi^s_\text{no-props}$ are again parameters we optimize. To avoid confusion, we note that the idea of learning the parameters over a prior may seem odd if viewed from a Bayesian perspective. However, when the prior is viewed as describing a component of a hierarchical generative model, e.g., one describing how the world generates $\theta^s_\text{no-props}$, then learning $\delta$ can be understood simply as a natural part of fitting a hierarchical probabilistic model.

We now present the ELBO for synthesizing models with these sets of parameters. For brevity, we present the ELBO for classification and regression models. However, the ELBO for dimensionality reduction models of the forms in \Sect{dim_red_mdls} is nearly identical and simply incorporates an additional KL term between the prior and posteriors over latent state.  Distinguishing the three sets of parameters just defined, we now refer to the probability of observed data for system instance $s$ under a model for classification or regression with the notation $p_{\theta_\text{shared}}(Y^s|X^s, \theta^s_\text{props}, \theta^s_\text{no-props})$, and the ELBO now takes the form
\begin{align}
 \sum_{s=1}^S \mathbb{E}_{q_{\phi^s_\text{props}, \phi^s_\text{no-props}}(\theta^s_\text{props}, \theta^s_\text{no-props})}&\left[ \log p_{\theta_\text{shared}}(Y^s|X^s, \theta^s_\text{props}, \theta^s_\text{no-props}) \right] - \text{KL}\left[q_{\phi^s_\text{props}}(\theta^s_\text{props}) || p_\gamma(\theta^s_\text{props}|M^s)\right] \nonumber \\
 - &\text{KL}\left[q_{\phi^s_\text{no-props}}(\theta^s_\text{no-props}) || p_\delta(\theta^s_\text{no-props})\right], \label{eq:ELBO_FULL}
\end{align}
where we define $q_{\phi^s_\text{props}, \phi^s_\text{no-props}}(\theta^s_\text{props}, \theta^s_\text{no-props}) := q_{\phi^s_\text{props}}(\theta^s_\text{props})q_{\phi^s_\text{no-props}}(\theta^s_\text{no-props})$. The derivation on \Eq{ELBO_FULL} can be found in \appendixSect{full_elbo_derivation}. As in \Sect{dim_red_mdls}, for convenience we have made the assumption that the posterior over model parameters for each system instance factorizes, and again note that in principle this could be relaxed. 

\subsection{Application details for DPMS}

\subsubsection{Application details for Section \ref{sec:syn_example}}
\label{sec:syn_mdl_fitting}

Our goal is to apply DPMS to learn models for the simulated brains. For each individual $s$, we suppose that the activity from $d^s_x$ neurons at time $t$, $x^s_t \in \mathbb{R}^{d_x^s}$, drives behavior, $y^s_t \in \mathbb{R}$, according to
\begin{align*}
    y_t^s &= f(l_t^s) + r_t^s \\
    l_t^s &= ({\omega^s})^T x_t^s \\
    r_t^s &\sim \mathcal{N}(0, (\nu^s)^2) \\
    \nu^s &\sim \Gamma(\alpha=10, \beta=1000),
\end{align*}
\noindent where $T$ denotes transpose, $f(l^s_t) = \sin(3l^s_t) + l^s_t$ represents the conserved mapping from $l^s_t \in \mathbb{R}$ to behavior, $\omega^s \in \mathbb{R}^{d^s_x}$ are individual-specific weights for the projection into the shared low-dimensional space, $r_t^s \in \mathbb{R}$ is recording noise of standard deviation $\nu^s \in \mathbb{R}$, and $\mathcal{N}$ and $\Gamma$ denote the normal and gamma distributions. Values of $\omega^s$ were randomly drawn for each brain from Normal distributions conditioned on neuron position, with the conditional mean and standard deviations formed from a sum of random bump functions. DPMS amounts to forming posteriors over $\omega^s$ and $\nu^s$ for each individual, learning the shared function $f$, learning the functions $\mu$ and $\sigma$ of the CPD over $\omega^s$, and learning the prior over $\nu^s$. We generated out-of-distribution data by allowing all neurons to be active and $l^s_t$ to explore the full domain of $f$ and quantified performance by measuring the ELBO, R-squared, and correlation between the true and predicted behavior.  Additional details on the generation of the ground-truth simulated brains and data as well as metrics can be found in the supplement. 

Applying the strategy outlined in \Sect{practical_concerns}, we let the weights depend on properties through the CPD. Defining hats to indicate estimated entities, we specify
\begin{align*}
    p_\gamma(\omega^s| M^s) &= \prod_{i=1}^{d^s_x}p_\gamma(\omega^s[i]| M^s[i,:]) = \prod_{i=1}^{d^s_x}\mathcal{N}(\hat{\mu}_{\gamma_1}(M[i,:]), \hat{\sigma}^2_{\gamma_2}(M[i,:])) 
\end{align*}
\noindent where $\gamma = \{\gamma_1, \gamma_2\}$ are parameters we seek to learn. Here $\hat{\mu}_{\gamma_1}$ and $\hat{\sigma}_{\gamma_2}$ assign values over finely spaced overlapping cubes in space (see \Sect{SHBF_fcns}). While a simple class of functions, we found these piecewise constant functions outperformed other alternatives in their application to real neural data, such as smooth feedforward neural networks. This functional form also implicitly discretizes property space, so that \Cra{main} can be directly appealed to for understanding how they will be optimized. The prior, $p_\delta(\nu^s)$, we learn over $\nu^s$ is a Gamma distribution with learnable shape and rate parameters. We specify the form of $\hat{f}_{\theta_\text{shared}}$ as a general feed-forward neural network. Finally, we specify the form of the approximate posteriors using a Gaussian mean-field approximation for $q_{\phi^s_\text{props}}(\theta^s)$ and a Gamma distribution for $q_{\phi^s_\text{no-props}}(\nu^s)$.  

With all of this specified, we followed the strategy outlined in \Sect{DPMS} for optimizing \Eq{ELBO_FULL}. To prevent overfitting, early-stopping was performed based on a small amount of validation data, generated in the same was as the training data, for each individual. Additional details regarding model components and fitting can be found in the supplement. 

We wanted to compare models synthesized with DPMS to those fit to individual system instances. To achieve an apples-to-apples comparison, we applied the full DPMS framework when fitting models to data from one individual alone, and we again applied early stopping based on validation data. We note that because the CPD factorizes over neurons, it is still in principle possible to learn a valid CPD given sufficiently dense sampling of the behavior and property space by a single individual. Also note that to compare fit and ground-truth quantities, we applied scales and offsets that account for non-identifiability in the model class (see supplement for details). 

\subsubsection{Application details for section \ref{sec:regResults}}
\label{sec:app_details_reg}

From the datasets described in Section \ref{sec:data_preprocessing}, we identified 7 fish that were imaged at similar rates and that demonstrated robust swimming responses to phototaxis. For 4 of these fish, the entire recording duration included phototaxis conditions, and we treated these as the base fish. As only a portion of the recording time was devoted to phototaxis for the remaining three fish,  we treated these as the target fish.

The form of regression models we employ is shown in \Fig{models}a, in which the activity of all recorded neurons from fish $s$ at one time step, $x^s_t \in \mathbb{R}^{d^s_x}$, is used to predict swimming signals, $y^s_t \in \mathbb{R}^2$, at the next. We fit two-component models nearly identical to those fit in \Sect{syn_mdl_fitting} with two small differences. First, we use a shared low-dimensional space with ten dimensions, which we found to produce better model performance than a one-dimensional latent space. Second, the same general form of neural network was used for $\hat{f}$ as in \Sect{syn_mdl_fitting}, but adjustments were made to accommodate the larger dimensionalities of the low-dimensional space and the predicted output. We then select the forms of the CPD, fixed priors and approximate posteriors as follows.  Following the approach in \Sect{practical_concerns}, for these models,  $\theta^s_\text{props} \in \mathbb{R}^{d^s_x \times 10}$ are the weights, $\omega^s \in \mathbb{R}^{d_x^s \times 10}$, mapping from neural activity to the common low-dimensional space. We use a factorized CPD, 
\begin{align*}
    p_\gamma(\omega^s| M^s) = \prod_{j=1}^{10} \prod_{i=1}^{d_x^s} \mathcal{N}(\mu_{\gamma_{1,j}}(M[i,:]), \sigma_{\gamma_{2,j}}(M[i,:])),
\end{align*}
where $\mu_j$ and $\sigma_j$ are functions learned for each dimension, $j$, of the same form as those in \Sect{syn_mdl_fitting} and $\gamma = \{\gamma_{1,j}, \gamma_{2,j}\}_{j=1}^{10}$ are the parameters of the CPD.  The parameters $\theta^s_\text{no-props}$ are the standard deviation of the noise for the swim signals. We learn a Gamma prior over these. Finally, we used a mean field approximation for both $q(\omega^s)$ and  $q(\theta^s_\text{no-props})$, composed of a product of univariate Gaussian and Gamma distributions.
We again follow the strategy outlined in Section \ref{sec:practical_concerns} for optimizing \Eq{ELBO_FULL}.

Following the same logic outlined in \Sect{syn_mdl_fitting}, we still applied the full synthesis framework when fitting models to data from the target fish alone to ensure an apples-to-apples comparison. 

To avoid overfitting in both scenarios, we applied early stopping based on held-out validation data. We divided the data for each fish into equally sized sets of train, validation and test data, and varied the data assigned to each with 3-fold cross validation, being careful to roughly balance the swim vigor across train, validation and test sets. Early stopping was performed on the validation data for all the base fish and target fish when applying DPMS and for the target fish alone when fitting models to it in isolation. The validation and test data for the target fish was always the same in both scenarios. 

To adjust for different amounts of testing data available across fish, we divided the ELBO by the number of samples in the testing data to arrive at the normalized ELBO. 

Finally, to further examine the dependence of the fit models on the amount of data available for the target fish, we varied the percentage of train and validation data allocated to each fold actually used for the target fish.  Additional details on the form of $\hat{f}$, CPD, fixed priors, approximate posteriors and model fitting are provided in the supplement. 
 
\subsubsection{Application Details for Section \ref{sec:dimRedResults}}
\label{sec:dr_app_details}

The form of factor analysis models is shown in \Fig{models}b. In these models, brain-wide neural activity at time $t$, $x^s_t \in \mathbb{R}^{d_x^s}$, for fish $s$ is explained by a small number of latent state variables, $z^s_t \in \mathbb{R}^{d_z}$. We model latent state variables as inhabiting the same latent space for all fish, so $d_z$ is fixed across fish. Under a factor analysis model, the observed neural activity for fish $s$ is modeled as 
\begin{align*}
    x_t^s &= \Lambda^s z_t^s + \eta^s + r_t^s \\
    z_t^s &\sim \mathcal{N}(0, I) \\
    r_t^s &\sim \mathcal{N}(0, \text{diag}[(\nu^s)^2]),
\end{align*}
for $\Lambda^s \in \mathbb{R}^{d^s_x \times d_z}$, $\eta^s \in \mathbb{R}^{d^s_x}$ and where $\nu^s \in \mathbb{R}^{d^s_x}$ is a vector of standard deviations so $\text{diag}[(\nu^s)^2])$ is a diagonal covariance matrix. We found that using a latent space with ten dimensions worked well, and set $d_z = 10$.

The key intuition behind our approach is that we can use measurable properties to find consistent mappings between latent and observed variables across system instances. These mappings implicitly define the latent space for each example system, so by ensuring these mappings are consistent, we ensure that latent state variables can indeed be interpreted as residing in the same latent space across system instances. Intuitively, through synthesis we can learn that neurons at certain positions in the brain couple to certain latent computational quantities in particular ways, and use this information to relate models across fish. 

As discussed in \Sect{DPMS}, a CPD that relates system properties to model parameters is required to enable this synthesis. Applying \Sect{dim_red_mdls}, $\theta^s = \{\Lambda^s, \eta^s, \nu^s\}$, and we learn a CPD of the form 
\begin{align*}
 p_\gamma(\Lambda^s, \eta^s, \nu^s | M^s) = p_{\gamma_1}(\Lambda^s|M^s)p_{\gamma_2}(\eta^s|M^s)p_{\gamma_3}(\nu^s| M^s),
 \end{align*}
 where $\gamma = \{\gamma_1, \gamma_2, \gamma_3\}$, $p_{\gamma_1}(\Lambda^s|M^s)$ and $p_{\gamma_2}(\eta^s|M^s)$ are again a product of conditional Gaussian distributions with mean and standard deviations that are learnable functions of neuron position, just as $p(\omega^s| M^s)$ in \Sect{regResults}, and $p_{\gamma_3}(\nu^s| M^s) = \prod_{i=1}^{d^s}\text{Gamma}(\alpha_{\gamma_{3, 1}}(M[i,:], \beta_{\gamma_{3, 2}}(M[i,:])))$, where $\alpha$ and $\beta$ are functions that assign fixed values over finely spaced cubes in space with parameters $\gamma_3 = \{\gamma_{3, 1}, \gamma_{3, 2}\}$.

We use approximate posteriors for each fish, $q_{\phi^s_\theta}(\Lambda^s, \eta^s, \nu^s)$, that are a product of univariate Gaussian and Gamma distributions, and we define $q_{\phi_z^s}(Z^s) = \prod_{i=1}^{n^s}\mathcal{N}(\iota^s_i, \Sigma^s)$, where $\phi_z^s = \{ \{\iota^s_i\}_{i=1}^s, \Sigma^s\}$ and $\iota^s_i \in \mathbb{R}^k$ are mean vectors for each data point and $\Sigma^s \in \mathbb{R}^{k \times k}$ is a full covariance matrix shared across data points. Sharing the covariance across data points reduces the number of parameters to fit and is also motivated by the observation that posteriors over the latent variables for all data points for factor analysis models with fixed parameters share the same covariance.   We fit models with stochastic gradient ascent as described in \Sect{DPMS} and use early stopping based on the performance of all fish to avoid overfitting. Additional details on the form of CPD and approximate posteriors and model fitting and evaluation can be found in the supplement.

\subsection{Preprocessing of experimental data}
\label{sec:data_preprocessing}

The analyses in \sects{regResults}{dimRedResults} used publicly available data from \cite{chen2018}.  Minimal additional processing of data was done.  In particular, neural activity, recorded through whole-brain calcium imaging and reported as $\Delta\text{F}/\text{F}$ in the released data was scaled by a factor of $10000$ for analyses in \sect{regResults} and $10$ for analyses in \sect{dimRedResults}.  This was done to improve the numerical stability of model fitting.

Additionally, for each fish, swimming signals recorded in the form of the smoothed, local standard deviation of voltage signals from motor nerves running down both sides of a fish's tail in the original data, were scaled. This was done because the magnitude of these signals could vary from individual to individual for non-biological reasons (e.g., electrode impedance). To account for this, we normalized the swim signals for each fish by dividing them by the maximum value recorded on either electrode during phototaxis.  We chose to normalize by the largest value during phototaxis because swimming behavior can change across conditions, and while fish were recorded under different conditions, at least one portion of each experiment for each fish was devoted to phototaxis.  After this scaling was performed, swim signals were multiplied by a factor of 100, again to improve numerical stability during fitting. 

The exact set of fish used in each analysis, and the criteria for selecting them, are provided in the respective sections of the Methods and Supplemental Methods.

\subsection{Data and code availability}

All datasets used in this study were either generated directly by the code in the accompanying GitHub repository and can be qualitatively reproduced, or were obtained from previously published, publicly available sources. The larval zebrafish whole-brain imaging data of Chen*, Mu*, Hu* et al. (Neuron, 2018) can be downloaded from the Janelia Research Campus repository (\nolinkurl{https://doi.org/10.25378/janelia.7272617})

All code used to implement DPMS and perform the analyses in this study is available at \nolinkurl{https://github.com/neuro-will/probabilistic_model_synthesis}.

\section{Acknowledgements}

This work was supported by the Howard Hughes Medical Institute. JEF additionally acknowledges support from the National Institute for Theory and Mathematics in Biology through the National Science Foundation (grant number DMS-2235451) and the Simons Foundation (grant number MPTMPS-00005320).

{\small
\newpage
 \bibliographystyle{unsrt}
 \bibliography{BishopPapers.bib}
 }

\newpage
\setcounter{page}{1}
\appendix

\renewcommand{\thesection}{S\arabic{section}}
\renewcommand{\thetable}{S\arabic{table}}
\renewcommand{\thefigure}{S\arabic{figure}}

\section{Derivations for various forms of the ELBO}
\label{sec:ELBO_all}

In this section, we will derive general forms of evidence lower bounds (ELBO) applicable to synthesizing classification, regression and dimensionality reduction models, and will show that when the evidence lower bound is tight, the approximate posteriors must equal the true posteriors. Together these results provide a basis for understanding why maximizing the ELBO provides a means for estimating the parameters of the CPD, the shared parameters when they appear, and the parameters of the approximate posteriors.
We will then use these results to derive the specific forms of the ELBO for  \Eq{ELBO}, \Eq{ELBO_DR} and \Eq{ELBO_FULL}.

\subsection{General forms of the ELBO}

We begin by reviewing the basics of variational inference appropriate for our setting and derive general forms of the ELBO from which we can later derive the specific forms in the main text. In doing so, we will show that the ELBO forms a lower bound on the log-likelihood of observed data and that when it is tight, the approximate posteriors must equal the true posteriors. The review presented here is provided for completeness and largely follows derivations available elsewhere (e.g., \cite{blei2017}). 

Consider generative models with the following two components.  The first is a conditional prior over latent variables, $\mathbf{L}$, given properties, $\mathbf{M}$, which we denote as $p_\alpha(\mathbf{L} | \mathbf{M})$, where $\alpha$ are parameters we may seek to learn.  In this review, we use bold fonts to distinguish the variables we introduce here from those in the main text. When we derive each version of the ELBO below, we will map the variables in the main text to these. The second component provides the probability of some $\mathbf{Y}$ given some $\mathbf{X}$ and $\mathbf{L}$, which we denote as $p_\beta(\mathbf{Y} |\mathbf{X}, \mathbf{L})$, where $\beta$ are again parameters we may seek to learn. We introduce $\mathbf{X}$ so the derivations here can be directly mapped onto the problem of synthesizing classification and regression models.  We will then describe how this analysis can be modified for dimensionality reduction models. 

We introduce an approximate posterior distribution $q_\mathbf{\rho}(\mathbf{L})$, where $\rho$ are parameters we will optimize so the KL-divergence 
between the approximate and true posterior, $p_{\alpha, \beta}(\mathbf{L} |\mathbf{X}, \mathbf{Y}, \mathbf{M})$, is minimized. Note that both $\alpha$ and $\beta$ appear as parameters of the posterior since they are the parameters of the likelihood and prior. We can write this KL-divergence as
\begin{align}
    \text{KL}\left[q_\mathbf{\rho}(\mathbf{L}) || p_{\alpha, \beta}(\mathbf{L} |\mathbf{X}, \mathbf{Y}, \mathbf{M}) \right] &= \mathbf{E}_{q_\mathbf{\rho}(\mathbf{L})}\left( \log q_\mathbf{\rho}(\mathbf{L}) - \log p_{\alpha, \beta}(\mathbf{L} |\mathbf{X}, \mathbf{Y}, \mathbf{M})\right) \notag \\
    &= \mathbf{E}_{q_\mathbf{\rho}(\mathbf{L})}\left( \log q_\mathbf{\rho}(\mathbf{L}) - \log \frac{p_\beta(\mathbf{Y} |\mathbf{X}, \mathbf{L})p_\alpha(\mathbf{L} | \mathbf{M})}{p_{\alpha, \beta}(\mathbf{Y}| \mathbf{X}, \mathbf{M})}\right) \notag \\
    &= - \mathbf{E}_{q_\mathbf{\rho}(\mathbf{L})}\left(\log p_\beta(\mathbf{Y} |\mathbf{X}, \mathbf{L})\right) + \text{KL}\left[q_\mathbf{\rho}(\mathbf{L}) || p_\alpha(\mathbf{L} | \mathbf{M}) \right] & \notag \\ &\quad\quad\quad +\log p_{\alpha, \beta}(\mathbf{Y}| \mathbf{X}, \mathbf{M}). \label{eq:ELBO_derivation}
\end{align}
Note that $\log p_{\alpha, \beta}(\mathbf{Y}| \mathbf{X}, \mathbf{M})$ is the conditional log-likelihood of observed data. Recall that KL divergence is strictly non-negative, so from \Eq{ELBO_derivation} we can conclude
\begin{align}
    \log p_{\alpha, \beta}(\mathbf{Y}| \mathbf{X}, \mathbf{M}) \geq \mathbf{E}_{q_\mathbf{\rho}(\mathbf{L})}\left(\log p_\beta(\mathbf{Y} |\mathbf{X}, \mathbf{L})\right) - \text{KL}\left[q_\mathbf{\rho}(\mathbf{L}) || p_\alpha(\mathbf{L} | \mathbf{M}) \right]. \label{eq:general_ELBO}
\end{align}
The right hand side lower bounds the conditional log-likelihood of observed data, and is the general form of the ELBO we can apply when synthesizing classification and regression models in this work.  Having established this, we now also show that when this bound is tight, the approximate posterior must equal the true posterior.  This can be established by noting that when the bound is tight, the left and right sides of \Eq{general_ELBO} are equal.  Substituting this into \Eq{ELBO_derivation} we then have $\text{KL}\left[q_\mathbf{\rho}(\mathbf{L}) || p_{\alpha, \beta}(\mathbf{L} |\mathbf{X}, \mathbf{Y}, \mathbf{M}) \right] = 0$. KL divergence is zero only when the two distributions are equal, establishing that when the ELBO is tight, the approximate posterior must equal the true posterior. 

The same analysis can be repeated for dimensionality reduction models of the form introduced in \Sect{dim_red_mdls}.  Here, we introduce latent state $\mathbf{Z}$, a prior over latent state $p_\zeta(\mathbf{Z})$, where $\zeta$ are parameters we may seek to learn, and the observation model $p_\beta(\mathbf{Y} | \mathbf{Z}, \mathbf{L})$. The most straightforward way of extending the above is by assuming we can analytically marginalize out the latent state to calculate the likelihood of observed data conditioned on $\mathbf{L}$, $p_{\beta, \zeta}(\mathbf{Y} | \mathbf{L})$. This is the most straightforward approach because it simply corresponds to dropping $\mathbf{X}$ in the above derivations. However, arriving at an analytic form for $p_{\beta, \zeta}(\mathbf{Y} | \mathbf{L})$ will often not be possible.  Therefore, a more general approach is to introduce an additional approximate posterior over $\mathbf{Z}$, $q_\omega(\mathbf{Z})$, and then seek to minimize the KL divergence between the joint approximate posterior for $\mathbf{L}$ and $\mathbf{Z}$, $q_\rho(\mathbf{L})q_\omega(\mathbf{Z})$, and the true posterior $p_{\alpha, \beta, \zeta }(\mathbf{L}, \mathbf{Z} | \mathbf{Y}, \mathbf{M})$, which can be written as
\begin{align*}
    \text{KL}\left[q_\rho(\mathbf{L})q_\omega(\mathbf{Z}) || p_{\alpha, \beta, \zeta }(\mathbf{L}, \mathbf{Z} | \mathbf{Y}, \mathbf{M})\right] &= \mathbb{E}_{q_\rho(\mathbf{L})q_\omega(\mathbf{Z})}\left(\log q_\rho(\mathbf{L})q_\omega(\mathbf{Z}) - \log p_{\alpha, \beta, \zeta }(\mathbf{L}, \mathbf{Z} | \mathbf{Y}, \mathbf{M}) \right) \notag \\
    &= \mathbb{E}_{q_\rho(\mathbf{L})q_\omega(\mathbf{Z})}\biggl(\log q_\rho(\mathbf{L})q_\omega(\mathbf{Z}) \notag \\
    &\quad\quad\quad -\log \frac{p_\beta(\mathbf{Y}|\mathbf{Z}, \mathbf{L})p_\alpha(\mathbf{L} | \mathbf{M})p_\zeta(\mathbf{Z})}{p_{\alpha, \beta, \zeta}(\mathbf{Y}|\mathbf{M})} \biggr) \notag \\
    &= - \mathbb{E}_{q_\rho(\mathbf{L})q_\omega(\mathbf{Z})}\left(\log p_\beta(\mathbf{Y}|\mathbf{Z}, \mathbf{L}) \right) + \text{KL}\left[q_\rho(\mathbf{L}) | p_\alpha(\mathbf{L} || \mathbf{M}) \right]  \notag \\
    &\quad\quad\quad +\text{KL}\left[q_\omega(\mathbf{Z}) || p_\zeta(\mathbf{Z}) \right] + \log p_{\alpha, \beta, \zeta}(\mathbf{Y}|\mathbf{M}).
\end{align*}
Following the same analysis as the above, we can then conclude that 
\begin{align}
    \log p_{\alpha, \beta, \zeta}(\mathbf{Y}|\mathbf{M}) \geq \mathbb{E}_{q_\rho(\mathbf{L})q_\omega(\mathbf{Z})}\left(\log p_\beta(\mathbf{Y}|\mathbf{Z}, \mathbf{L}) \right) - \text{KL}\left[q_\rho(\mathbf{L}) | p_\alpha(\mathbf{L} || \mathbf{M}) \right] - \text{KL}\left[q_\omega(\mathbf{Z}) || p_\zeta(\mathbf{Z}) \right]. \label{eq:general_dr_ELBO}
\end{align}
The right hand side of \Eq{general_dr_ELBO} is the general form of the ELBO we will use when synthesizing dimensionality reduction models.  Similar to above, we can show that when the bound is tight the approximate posteriors must equal the true posteriors. Having introduced these general forms for the ELBO, we now use them to derive \Eq{ELBO}, \Eq{ELBO_DR} and \Eq{ELBO_FULL}

\subsection{Derivation of \Eq{ELBO}}
\label{sec:basic_elbo_derivation}

We first derive \Eq{ELBO}, the form of the ELBO for synthesizing classification and regression models when all model parameters are related through system instance properties through the CPD.  Relating this to \Eq{general_ELBO} above, in this scenario $\mathbf{Y} = \{Y^s\}_{s=1}^S$ and $\mathbf{X} = \{X^s\}_{s=1}^S$, $\mathbf{L} = \{\theta^s\}_{s=1}^S$. We then have 
\begin{align}
    \log p_\beta(\mathbf{Y} |\mathbf{X}, \mathbf{L}) = \log p(\{Y^s\}_{s=1}^S | \{X^s, \theta^s\}_{s=1}^S) = \log \prod_{s=1}^S p(Y^s | X^s, \theta^s) = \sum_{s=1}^S \log p(Y^s | X^s, \theta^s),
\end{align}
where the third equality follows from the conditional independence of $Y^s$ given $X^s$ and $\theta^s$ across system instances, and we have dropped $\beta$ as there are no shared parameters in the models of individual system instances in this scenario.  If we continue by corresponding the parameters of the CPD, $\gamma$, with $\alpha$ above, it follows that  
\begin{align}
    p_\alpha(\mathbf{L} | \mathbf{M}) = p_\gamma(\{\theta^s\}_{s=1}^S | \{M^s \}_{s=1}^S) = \prod_{s=1}^S p_\gamma(\theta^s | M^s),
\end{align}
where the second step follows from the conditional independence of model parameters across system instances given properties under the CPD.  Additionally, under the scenario presented in \Sect{DPMS}, $q_\mathbf{\rho}(\mathbf{L}) = \prod_{s=1}^S q_{\phi^s}(\theta^s)$, where $\rho = \{\phi^s\}_{s=1}^S$. Finally, we have  $\text{KL}\left[\prod_{s=1}^S q_{\phi^s}(\theta^s) || \prod_{s=1}^S p_\gamma(\theta^s | M^s) \right] = \sum_{s=1}^S \text{KL}\left[ q_{\phi^s}(\theta^s) || p_\gamma(\theta^s | M^s) \right]$. Substituting the above into \Eq{general_ELBO}, it immediately follows that the ELBO takes the form 
\begin{align}
 \sum_{s=1}^S \mathbb{E}_{q_{\phi^s}(\theta^s)}\left[ \log p(Y^s|X^s, \theta^s) \right] - \text{KL}\left[q_{\phi^s}(\theta^s) || p_\gamma(\theta^s|M^s) \right]. 
\end{align}

\subsection{Derivation of \Eq{ELBO_DR}}
\label{sec:elbo_dr_derivation}

Using \Eq{general_dr_ELBO}, we now derive the form of the ELBO for synthesizing dimensionality reduction models as presented in \Sect{dim_red_mdls}.  Corresponding the variables in \Sect{dim_red_mdls} to \Eq{general_dr_ELBO}, we have $\mathbf{Y} = \{Y^s\}_{s=1}^S$, $\mathbf{Z} = \{Z^s\}_{s=1}^S$ and $\mathbf{M} = \{M^s\}_{s=1}^S$.  We also have 
\begin{align}
    \log p_\beta(\mathbf{Y} |\mathbf{Z}, \mathbf{L}) = \log p(\{Y^s\}_{s=1}^S | \{Z^s, \theta^s\}_{s=1}^S) = \log \prod_{s=1}^S p(Y^s | Z^s, \theta^s) = \sum_{s=1}^S \log p(Y^s | Z^s, \theta^s),
\end{align}
where we have again used the independence of observations across system instances conditioned on $Z^s$ and $\theta^s$ and dropped $\beta$. We can correspond $p_\alpha(\mathbf{L} | \mathbf{M})  = \prod_{s=1}^S p_\gamma(\theta^s | M^s)$, where $\alpha = \gamma$,  just as in \Sect{basic_elbo_derivation}.  We also have 
\begin{align*}
    p_\zeta(\mathbf{Z}) = p_\lambda(\{Z^s\}_{s=1}^S) = \log \prod p_\lambda(Z^s) = \sum_{s=1}^S \log p_\lambda(Z^s), 
\end{align*}
where $\zeta = \lambda$.  Finally, we recognize that $q_\mathbf{\rho}(\mathbf{L}) = \prod_{s=1}^S q_{\phi_\theta^s}(\theta^s)$ and $q_\omega(\mathbf{Z}) = \prod_{s=1}^S q_{\phi_z^s}(Z^s)$, where $\rho = \{\phi_\theta^s\}_{s=1}^S$ and $\omega = \{\phi_z^s\}_{s=1}^S$. Substituting the above in into the right hand side of \Eq{general_dr_ELBO}, we conclude the ELBO in this scenario is 
\begin{align*}
    \sum_{s=1}^S \mathbb{E}_{q_{\phi^s_\theta, \phi^s_z}(Z^s, \theta^s)}\left[ \log p(Y^s|Z^s, \theta^s) \right] - \text{KL}\left[q_{\phi^s_\theta}(\theta^s) || p_\gamma(\theta^s|M^s)\right] -  \text{KL}\left[q_{\phi^s_z}(Z^s) || p_\lambda(Z^s)\right],
\end{align*}
where we define $q_{\phi^s_\theta, \phi^s_z} := q_{\phi^s_\theta}(\theta^s)q_{\phi^s_z}(Z^s)$, and we recognize $\text{KL}\left[\prod_{s=1}^S q_{\phi_\theta^s}(\theta^s) || \prod_{s=1}^S p_\gamma(\theta^s | M^s) \right] = \sum_{s=1}^S \text{KL}\left[ q_{\phi_\theta^s}(\theta^s) || p_\gamma(\theta^s | M^s) \right]$, and $\text{KL}\left[\prod_{s=1}^S q_{\phi_z^s}(Z^s) || \prod_{s=1}^S p_\lambda(Z^s) \right] = \sum_{s=1}^S \text{KL}\left[ q_{\phi_z^s}(Z^s) || p_\lambda(Z^s) \right]$.

\subsection{Derivation of \Eq{ELBO_FULL}}
\label{sec:full_elbo_derivation}

We now derive the ELBO when synthesizing regression and classification models when model parameters are partitioned as in \Sect{practical_concerns}. Relating this to \Eq{general_ELBO} above, in this scenario $\mathbf{Y} = \{Y^s\}_{s=1}^S$ and $\mathbf{X} = \{X^s\}_{s=1}^S$. Now the model parameters we treat as latent variables fall into two classes, $\theta^s_\text{props}$ and $\theta^s_\text{no-props}$, so $\mathbf{L} = \{\theta^s_\text{props}, \theta^s_\text{no-props}\}_{s=1}^S$.  In addition, there are shared parameters across models for system instances, $\theta_\text{shared}$, we seek to learn point estimates for.  With this, we can identify 
\begin{align}
    \log p_\beta(\mathbf{Y} |\mathbf{X}, \mathbf{L})  = \sum_{s=1}^S \log p_{\theta_\text{shared}}(Y^s | X^s, \theta_\text{props}^s, \theta^s_\text{no-props}),
\end{align}
where we have identified $\beta = \theta_\text{shared}$ and used the same conditional independence properties as in \Sect{basic_elbo_derivation}.  We can also recognize that 
\begin{align}
    p_\alpha(\mathbf{L} | \mathbf{M}) = p_\gamma(\{\theta^s_\text{props}\}_{s=1}^S | \{M^s \}_{s=1}^S)p_\delta(\{\theta^s_\text{no-props}\}_{s=1}^S) = \prod_{s=1}^S p_\gamma(\theta^s_\text{props} | M^s)p_\delta(\theta^s_\text{no-props}),
\end{align}
where $\alpha = \{\lambda, \delta\}$, and we have used the conditional independence properties of $\theta^s_\text{props}$ and $\theta^s_\text{no-props}$.  Finally, we can identify 
\begin{align}
    q_\rho(\mathbf{L}) = \prod_{s=1}^S q_{\phi_\text{props}^s}(\theta_\text{props}^s)q_{\phi_\text{no-props}^s}(\theta_\text{no-props}^s), 
\end{align}
where $\rho = \{\phi_\text{props}^s, \phi_\text{no-props}^s\}_{s=1}^S$. We also have that
\begin{align*}
    &\text{KL}\left[\prod_{s=1}^S q_{\phi_\text{props}^s}(\theta_\text{props}^s)q_{\phi_\text{no-props}^s}(\theta_\text{no-props}^s) ||  \prod_{s=1}^S p_\gamma(\theta^s_\text{props} | M^s)p_\delta(\theta^s_\text{no-props})\right]  = \\
    &\quad\quad\quad\quad \sum_{s=1}^S \text{KL}\left[q_{\phi^s_\text{props}}(\theta^s_\text{props}) || p_\gamma(\theta^s_\text{props}|M^s)\right] + \text{KL}\left[q_{\phi^s_\text{no-props}}(\theta^s_\text{no-props}) || p_\delta(\theta^s_\text{no-props})\right].
\end{align*}

From all of the above it them follows that the ELBO in this scenario is
\begin{align*}
    &\sum_{s=1}^S \mathbb{E}_{q_{\phi^s_\text{props}, \phi^s_\text{no-props}}(\theta^s_\text{props}, \theta^s_\text{no-props})}\left[ \log p_{\theta_\text{shared}}(Y^s|X^s, \theta^s_\text{props}, \theta^s_\text{no-props}) \right]  \\
    &\quad\quad\quad\quad
    -\text{KL}\left[q_{\phi^s_\text{props}}(\theta^s_\text{props}) || p_\gamma(\theta^s_\text{props}|M^s)\right] - \text{KL}\left[q_{\phi^s_\text{no-props}}(\theta^s_\text{no-props}) || p_\delta(\theta^s_\text{no-props})\right],
\end{align*}
where we define $q_{\phi^s_\text{props}}(\theta^s_\text{props})_{ \phi^s_\text{no-props}}(\theta^s_\text{no-props}) := q_{\phi^s_\text{props}}(\theta^s_\text{props})q_{\phi^s_\text{no-props}}(\theta^s_\text{no-props})$.

\section{The CPD can learn to represent variability}
\label{sec:proofs}

We have seen how the CPD encourages the posteriors over model parameters for each domain to be similar, but is it possible for a learned CPD to reflect uncertainty in model parameters arising from variability not predicted by measurable properties? In the case of discrete measurable properties, we can formally prove that this is true. Here we rigorously establish this result as a Theorem for discrete $\theta^s$. 

\begin{lemma}
    Let $q^s(\theta^s)$ for $s=1, \ldots, S$ be a finite set of probability density functions with finite entropy over the continuous random variables $\theta^s \in \mathbb{R}^{d_\theta \times m}$. Then $\sum_{s=1}^S\text{KL}\left[q^s(\theta^s) || p(\theta^s)\right]$ is minimized with respect to $p(\theta^s)$ when $p(\theta^s) = \frac{1}{S} \sum_{s'=1}^S q^{s'}(\theta^s)$.
    \label{lemma:base}
\end{lemma}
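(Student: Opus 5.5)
The plan is to rewrite the sum of KL divergences as a single KL divergence to the mixture plus a term that does not depend on $p$. Write $\bar q(\theta) := \frac{1}{S}\sum_{s'=1}^S q^{s'}(\theta)$, which is a valid density. Since every $q^s$ is a density on the same space $\mathbb{R}^{d_\theta\times m}$, I will drop the superscript on the dummy variable and write $\theta$. Expanding each divergence gives
\begin{align*}
\sum_{s=1}^S \text{KL}\left[q^s \,||\, p\right] = \sum_{s=1}^S \int q^s(\theta)\log q^s(\theta)\,d\theta - \sum_{s=1}^S \int q^s(\theta)\log p(\theta)\,d\theta .
\end{align*}
The first sum equals $-\sum_s H(q^s)$. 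It is finite because each entropy is finite, and it does not depend on $p$.

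For the second sum, I exchange the finite sum and the integral. This gives $-S\int \bar q(\theta)\log p(\theta)\,d\theta$. Adding and subtracting $S\int \bar q\log\bar q$ yields the identity
\begin{align*}
\sum_{s=1}^S \text{KL}\left[q^s \,||\, p\right] = S\,\text{KL}\left[\bar q \,||\, p\right] + S\,H(\bar q) - \sum_{s=1}^S H(q^s).
\end{align*}
The last two terms are constant in $p$. By Gibbs' inequality, $\text{KL}[\bar q\,||\,p]\ge 0$, with equality exactly when $p=\bar q$ almost everywhere. Hence the minimum is attained at $p=\bar q$. This also shows the minimizer is unique up to null sets.

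The main obstacle is justifying the manipulations with possibly infinite quantities. The key fact needed is that $H(\bar q)$ is finite, so that the added and subtracted term is legitimate.

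The upper bound comes from concavity of entropy: $H(\bar q)\ge \frac1S\sum_s H(q^s) > -\infty$. For the other direction, I use $\bar q\ge q^s/S$ pointwise. This gives $-\log\bar q \le \log S - \log q^s$ wherever $q^s>0$. Integrating against $q^s$ bounds each cross term $-\int q^s\log\bar q \le \log S + H(q^s) < \infty$. Averaging over $s$ gives $H(\bar q)\le \log S + \frac1S\sum_s H(q^s)$, so $H(\bar q)$ is finite.

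With that settled, the case where some $\int q^s\log p = -\infty$ needs no special treatment. In that case both sides of the identity are $+\infty$, and such a $p$ cannot be a minimizer, since $p=\bar q$ achieves the finite value $S H(\bar q)-\sum_s H(q^s)$. I expect this integrability bookkeeping, rather than the algebra, to be where care is needed.
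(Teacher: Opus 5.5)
Your proof is correct and follows the paper's route. Both write the sum as the $p$-independent constant $-\sum_s H(q^s)$ plus $S$ times the cross-entropy of $p$ relative to the mixture $\bar q$, which is minimized at $p=\bar q$; you make that last step explicit by splitting off $S\,\text{KL}[\bar q\,||\,p]$ and applying Gibbs' inequality. Your extra argument that $H(\bar q)$ is finite (concavity on one side, $\bar q\ge q^s/S$ on the other) fills in a point the paper leaves implicit, and it also gives uniqueness of the minimizer up to null sets.
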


\begin{proof}

    We begin by noting that all $\theta^s \in \mathbb{R}^{d_\theta \times m}$.  This means the lemma could be equivalently stated and will hold if we can prove the result for a set of distributions $q^s(\theta)$ and $p(\theta)$ over $\theta \in \mathbb{R}^{d_\theta \times m}$.

    Starting with the definition of the KL-divergence, we derive
    \begin{align*}
        \sum_{s=1}^S\text{KL}\left[q^s(\theta) || p(\theta) \right] &= \sum_{s=1}^S \int q^s(\theta) \left( \log q^s(\theta) - \log p(\theta)\right) d\theta \\
        &= \int \left( \sum_{s=1}^S   q^s(\theta) \left( \log q^s(\theta) - \log p(\theta)\right) \right) d\theta \\
        &= \int \left(  \sum_{s=1}^S   q^s(\theta) \log q^s(\theta)  - \sum_{s=1}^S   q^s(\theta)\log p(\theta)\right) d\theta \\
    &= \underbrace{\int \left( \sum_{s=1}^Sq^s(\theta)\log q^s(\theta) \right) d\theta}_{:=C} - \int \left( \sum_{s=1}^Sq^s(\theta)\log p(\theta) \right) d\theta.\\
    \end{align*}

    Note that $\int \left( \sum_{s=1}^Sq^s(\theta)\log q^s(\theta) \right) d\theta =   \sum_{s=1}^S \int q^s(\theta)\log q^s(\theta)  d\theta = -\sum_{s=1}^S \text{H}[q^s(\theta)]$, which will be finite, since the entropy of each $q^s(\theta)$ is finite by assumption. Defining $C:=-\sum_{s=1}^S \text{H}[q^s]$, we continue the proof by writing 
    \begin{align*}
        \sum_{s=1}^S\text{KL}\left[q^s(\theta^s) || p(\theta^s) \right] &= C - \int \left( \sum_{s=1}^Sq^s(\theta)\log p(\theta) \right) d\theta \\
        &= C - S \int \left( \frac{1}{S}\sum_{s=1}^Sq^s(\theta) \log p(\theta) \right) d\theta.
    \end{align*}
    We now define $q^*(\theta) := \frac{1}{S}\sum_{s'=1}^S q^{s'}(\theta)$.  It can be easily verified that $q^*(\theta)$ is a probability density function.  We can then write
    \begin{align}
        \sum_{s=1}^S\text{KL}\left[q^s(\theta) || p(\theta) \right] = C - S\int q^*(\theta)\log p(\theta)d\theta = C + S\mathbb{H}[q^*(\theta), p(\theta)], \label{eq:proof_cross_entropy}
    \end{align}
    where $\mathbb{H}[q^*(\theta), p(\theta)]$ is the cross entropy of $p(\theta)$ relative to $q^*(\theta)$.  To minimize \Eq{proof_cross_entropy}, we must minimize $\mathbb{H}[q^*(\theta), p(\theta)]$, which will occur when  $p(\theta) = q^*(\theta)$.
\end{proof}

\begin{theorem}
    Let $q_{\phi^s}(\theta^s)$ for $s \in 1, \ldots, S$ be a finite set of be probability density functions with finite entropy for the continuous random variables $\theta^s \in \mathbb{R}^{d_\theta \times m}$. Further, let $M^s$ take on values from some finite set, $\mathcal{M}$, for all $s$.  Finally, let $n(\lowerscript{m})$ be the number of  $s$ such that $M^s = \lowerscript{m}$.   Then $\sum_{s=1}^S\text{KL}\left[q_{\phi^s}(\theta^s)||p(\theta^s|M^s) \right]$ is minimized when $p(\theta^s|M^s = \lowerscript{m}) = \frac{1}{n(\lowerscript{m})}\sum_{s': M^{s'} = \lowerscript{m}} q_{\phi^{s'}}(\theta^{s})$ for all $\lowerscript{m}$ such that $n(\lowerscript{m}) > 0$.
    \label{theorem:main}
\end{theorem}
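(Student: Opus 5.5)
The plan is to reduce the theorem to \Lma{base} by grouping the system instances according to the value of their properties. Since $M^s$ takes values in the finite set $\mathcal{M}$, the index set $\{1,\ldots,S\}$ is partitioned into the disjoint classes $\mathcal{S}_{\lowerscript{m}} := \{s : M^s = \lowerscript{m}\}$ for $\lowerscript{m}\in\mathcal{M}$, with $|\mathcal{S}_{\lowerscript{m}}| = n(\lowerscript{m})$. Reordering the finite sum then gives
\begin{align*}
\sum_{s=1}^S \text{KL}\left[q_{\phi^s}(\theta^s)||p(\theta^s|M^s)\right] = \sum_{\lowerscript{m}\in\mathcal{M}:\, n(\lowerscript{m})>0} \; \sum_{s\in\mathcal{S}_{\lowerscript{m}}} \text{KL}\left[q_{\phi^s}(\theta^s)||p(\theta^s|M^s=\lowerscript{m})\right].
\end{align*}

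The key observation is that the conditional prior $p(\cdot|M=\lowerscript{m})$ is a separate density for each value $\lowerscript{m}$. Choosing it for one $\lowerscript{m}$ places no constraint on its choice for any other $\lowerscript{m}'$, and only the inner sum indexed by $\lowerscript{m}$ depends on $p(\cdot|M=\lowerscript{m})$. The outer sum is therefore a sum of terms in decoupled variables, and minimizing it is equivalent to minimizing each inner sum separately. Values $\lowerscript{m}$ with $n(\lowerscript{m})=0$ contribute nothing, so they can be left arbitrary. This is why the statement only constrains those $\lowerscript{m}$ with $n(\lowerscript{m})>0$.

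For a fixed $\lowerscript{m}$ with $n(\lowerscript{m})>0$, the inner sum is exactly the objective of \Lma{base}. It involves the finite family $\{q_{\phi^s}\}_{s\in\mathcal{S}_{\lowerscript{m}}}$, which has $n(\lowerscript{m})$ members, each of finite entropy on $\mathbb{R}^{d_\theta\times m}$, and the single density $p(\cdot|M=\lowerscript{m})$. As in the proof of \Lma{base}, all $\theta^s$ live in the same space, so the inner sum can be rewritten with a common variable $\theta$. Applying \Lma{base} with $S$ replaced by $n(\lowerscript{m})$ shows that the minimizer is $\frac{1}{n(\lowerscript{m})}\sum_{s'\in\mathcal{S}_{\lowerscript{m}}} q_{\phi^{s'}}(\theta)$. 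Combining these minimizers over all $\lowerscript{m}$ yields the claimed form.

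The only delicate point is justifying the decoupling. Minimizing a sum of nonnegative terms in independent arguments is the same as minimizing each term, provided each term is bounded below; here each term is at least $0$ by nonnegativity of KL. The inner minima are finite, as shown by the constant $C$ in \Lma{base}, so no $\infty-\infty$ issue arises. I would state this explicitly. I would also note that the minimizer is unique only up to null sets, which inherits from the cross-entropy step in \Lma{base}.
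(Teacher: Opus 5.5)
Your proposal is correct and follows essentially the same route as the paper: regroup the sum by the value $\lowerscript{m}$ of $M^s$, then apply \Lma{base} with $n(\lowerscript{m})$ in place of $S$ to each inner sum over $\{s : M^s = \lowerscript{m}\}$. Your explicit justification of the decoupling is a useful addition the paper leaves implicit. One small correction: it needs no finiteness, since all terms lie in $[0,\infty]$. Finiteness of $C$ alone also would not show the inner minima are finite; that follows instead from $\text{KL}[q_{\phi^s}\|q^*] \le \log n(\lowerscript{m})$.
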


\begin{proof}
    Note that we can write 
    \begin{align*}
    \sum_{s=1}^S\text{KL}\left[q_{\phi^s}(\theta^s)||p(\theta^s|M^s)  \right] = \sum_{\lowerscript{m}:  n(\lowerscript{m}) > 0}\sum_{s:M^s=\lowerscript{m}} \text{KL}\left[q_{\phi^s}(\theta^s)||p(\theta^s|M^s = \lowerscript{m})  \right]
    \end{align*}
    From \Lma{base}, each term $\sum_{s:M^s=\lowerscript{m}} \text{KL}\left[q_{\phi^s}(\theta^s)||p(\theta^s|M^s = \lowerscript{m})  \right]$ will be minimized when $p(\theta^s|M^s = \lowerscript{m}) = \frac{1}{n(\lowerscript{m})}\sum_{s': M^{s'} = \lowerscript{m}} q_{\phi^{s'}}(\theta^s)$.
\end{proof}

\Thm{main} can be understood as a generalization of previous work that established similar results where the distribution to optimize was not conditional \cite{hoffman2016, tomczak2018}. While \Thm{main} applies directly when $\theta^s$ is the same dimensionality across domains, it can be immediately applied to derive specialized results, such as the following corollary that applies when $\theta^s$ are continuous random variables and the CPD and approximate posteriors factor as described in \Sect{cpd}. 

\begin{corollary}
    Let $q_{\phi^s}(\theta^s)$ for $s \in 1, \ldots, S$ be a finite set of probability density functions for the random variables $\theta^s \in \mathbb{R}^{d^s_\theta \times m}$ with row length $m$.  Assume that for each $s$, $q_{\phi^s}(\theta^s) = \prod_{i=1}^{d^s_\theta} q_{\phi^s_i}(\theta^s[i,:])$ for some continuous probability density function $q_{\phi^s_i}(\theta^s[i, :])$ with finite entropy. Associate with each $s$ some $M^s \in \mathbb{R}^{d^s_\theta \times r}$, and assume the rows of $M^s$ can take values from some finite set $\mathcal{M}$. 
    
    Define $p(\theta^s|M^s) = \prod_{i=1}^{d^s_\theta} p(\theta^s[i,:]|M^s[i,:])$, where $p(\theta^s[i,:]|M^s[i,:] = \lowerscript{m})$ is a single continuous probability density function for each $\lowerscript{m} \in \mathcal{M}$. Then $\sum_{s=1}^S \text{KL}(q_{\phi^s}(\theta^s) || p(\theta^s|M^s))$ will be minimized when $p(\theta^s[i,:]|M^s[i,:] = \lowerscript{m}) = \frac{1}{n(\lowerscript{m})}\sum_{s',i: M^{s'}[i,:] = \lowerscript{m}} q_{\phi^{s'}_i}(\theta^{s}[i,:])$ for all $\lowerscript{m}$ such that $n(\lowerscript{m}) > 0$, where $n(\lowerscript{m}) = \sum_{s=1}^S \sum_{i=1}^{d^s_\theta} \mathbb{I}(M^s[i,:] = \lowerscript{m})$, and $\mathbb{I}(\cdot)$ is the indicator function. 
    \label{corollary:main}
\end{corollary}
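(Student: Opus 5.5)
The plan is to reduce the corollary to \Lma{base} (equivalently, to \Thm{main} with rows playing the role of system instances). The idea is that, because both the approximate posteriors and the CPD factor over rows, the total KL divergence decomposes into a sum of row-wise KL divergences. These can then be regrouped by the value of the property row.

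First, for each $s$, I will use the factorization of both $q_{\phi^s}$ and $p(\cdot|M^s)$ over the $d^s_\theta$ rows. The KL divergence between product densities is the sum of the KL divergences between corresponding factors; this is the same additivity already used in the ELBO derivations of \appendixSect{basic_elbo_derivation}. This gives
\begin{align*}
\sum_{s=1}^S \text{KL}\left[q_{\phi^s}(\theta^s) || p(\theta^s|M^s)\right] = \sum_{s=1}^S\sum_{i=1}^{d^s_\theta} \text{KL}\left[q_{\phi^s_i}(\theta^s[i,:]) || p(\theta^s[i,:]|M^s[i,:])\right].
\end{align*}
The additivity requires the log-densities to split as sums and the expectations to separate. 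The finite-entropy assumption on each row factor guarantees that the entropy part is finite, so no $\infty-\infty$ issue arises. Each summand involves only densities on $\mathbb{R}^m$, so I will rename every row variable to a common dummy variable $u \in \mathbb{R}^m$.

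Second, I will regroup the double sum by the value $\lowerscript{m}\in\mathcal{M}$ taken by $M^s[i,:]$. This writes the objective as $\sum_{\lowerscript{m}: n(\lowerscript{m})>0} \sum_{(s,i): M^s[i,:]=\lowerscript{m}} \text{KL}\left[q_{\phi^s_i}(u) || p(u|\lowerscript{m})\right]$. By the hypothesis that $p(\cdot|M^s[i,:]=\lowerscript{m})$ is a single density for each $\lowerscript{m}$, the outer terms depend on disjoint pieces of the CPD, so they can be minimized independently.

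Third, each inner sum is a finite sum of $n(\lowerscript{m})$ KL divergences from finite-entropy densities $q_{\phi^s_i}$ on $\mathbb{R}^m$ to one common density. \Lma{base}, applied with the index set $\{(s,i): M^s[i,:]=\lowerscript{m}\}$ in place of $\{1,\ldots,S\}$ and with $\mathbb{R}^m$ (a $1\times m$ matrix space) as the domain, shows that the inner sum is minimized by $p(u|\lowerscript{m}) = \frac{1}{n(\lowerscript{m})}\sum_{(s',i): M^{s'}[i,:]=\lowerscript{m}} q_{\phi^{s'}_i}(u)$. Renaming $u$ back to $\theta^s[i,:]$ gives the stated formula.

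The main obstacle is the first step: making the row-wise KL decomposition rigorous. In particular, one must ensure that the finite-entropy hypotheses on the factors justify splitting the integrals, and that relabeling rows across instances with different $d^s_\theta$ is legitimate. The remaining steps are direct bookkeeping around \Lma{base}.
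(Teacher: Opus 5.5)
Your proposal is correct and follows essentially the same route as the paper: the paper also splits the product KL into the row-wise sum $\sum_{s}\sum_{i}\text{KL}(q_{\phi^s_i}\,||\,p(\cdot|M^s[i,:]))$ and then applies \Thm{main} to the rows labelled by $M^s[i,:]$, which is your regroup-by-$\lowerscript{m}$ plus \Lma{base} step written out. Your remark that the finite-entropy hypothesis rules out an $\infty-\infty$ when splitting the integrals is slightly more careful than the paper's purely formal manipulation, but it does not change the route.
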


\begin{proof}

First note that 
    \begin{align*}
        &\text{KL}(q_{\phi^s}(\theta^s) || p(\theta^s|M^s)) = \int q_{\phi^s}(\theta^s) \left( \log q_{\phi^s}(\theta^s) - \log p(\theta^s|M^s) \right) d\theta^s \\
        &\quad = \int  \prod_{j=1}^{d^s_\theta} q_{\phi^s_j}(\theta^s[j,:]) \left(\log \prod_{i=1}^{d^s_\theta} q_{\phi^s_i}(\theta^s[i,:]) - \log \prod_{i=1}^{d^s_\theta} p(\theta^s[i,:]|M^s[i,:]) \right) d\theta^s \\
        &\quad = \int  \prod_{j=1}^{d^s_\theta} q_{\phi^s_j}(\theta^s[j,:]) \left(\ \sum_{i=1}^{d^s_\theta} \log q_{\phi^s_i}(\theta^s[i,:]) -  \sum_{i=1}^{d^s_\theta} \log p(\theta^s[i,:]|M^s[i,:]) \right) d\theta^s \\
        &\quad = \int \sum_{i=1}^{d^s_\theta}  \prod_{j=1}^{d^s_\theta}  q_{\phi^s_j}(\theta^s[j,:]) \left( \log q_{\phi^s_i}(\theta^s[i,:]) - \log p(\theta^s[i,:]|M^s[i,:]) \right)d\theta^s\\
        &\quad =  \sum_{i=1}^{d^s_\theta} \int \prod_{j=1}^{d^s_\theta}  q_{\phi^s_j}(\theta^s[j,:]) \left( \log q_{\phi^s_i}(\theta^s[i,:]) - \log p(\theta^s[i,:]|M^s[i,:]) \right)d\theta^s\\
        &\quad =  \sum_{i=1}^{d^s_\theta} \int q_{\phi^s_i}(\theta^s[i,:])\left(\prod_{j\neq i}^{d^s_\theta}   q_{\phi^s_j}(\theta^s[j,:])\right) \left( \log q_{\phi^s_i}(\theta^s[i,:]) - \log p(\theta^s[i,:]|M^s[i,:]) \right)d\theta^s\\
        &\quad = \sum_{i=1}^{d^s_\theta} \int q_{\phi^s_i}(\theta^s[i,:]) \left( \log q_{\phi^s_i}(\theta^s[i,:]) - \log p(\theta^s[i,:]|M^s[i,:])  \right) d\theta^s[i,:] \\
        &\quad = \sum_{i=1}^{d^s_\theta} \text{KL}(q_{\phi^s_i}(\theta^s[i,:]) || p(\theta^s[i,:]|M^s[i,:]))
    \end{align*}
    From here it follows that 
    \begin{align*}
        \sum_{s=1}^S \text{KL}(q^s(\theta^s) || p(\theta^s|M^s)) = \sum_{s=1}^S \sum_{i=1}^{d^s_\theta} \text{KL}(q_{\phi^s_i}(\theta^s[i,:]) || p(\theta^s[i,:]|M^s[i,:]))
    \end{align*}
    Since the objective decomposes as a sum of row-wise KL divergences, we can apply \Thm{main} to the collection $\{q_{\phi^s_i}\}_{s,i}$ with labels $\{M^s[i,:]\}_{s,i}$. This yields the stated mixture solution for each value of $m$.
\end{proof}

\section{Supplemental methods}

\subsection{Additional methods for the simulated example}
\label{sec:simulated_ex_details}

\paragraph{The form of the ground-truth CPD}

We specify the mathematical form of $\mu$ and $\sigma$ for the ground-truth CPD, visualized in \Fig{syn_results}b, for the simulated example. These were randomly generated functions mapping from $[0, 1]\times[0,1]$ to $\mathbb{R}$ of the following form
\begin{align*}
    \mu(m) &= \sum_{i=1}^{50} g_i(m) \\
    \sigma(m) &= \sum_{i=1}^{50} |h_i(m)| + .01,
\end{align*}
where $|\cdot|$ denotes absolute value and $g_i$ and $h_i$ were randomly generated Gaussian bump functions of the general form $a e^{-\frac{||m - c||_2^2}{0.2^2}}$, where $a \in \mathbb{R}$ and $c \in \mathbb{R}^2$ were randomly sampled magnitudes and centers of each bump function.  The centers were sampled i.i.d. from a Uniform distribution over the unit square.  When generating $\mu$, magnitudes were sampled i.i.d. from a $\mathcal{N}(0, 1)$ distribution and when generating $\sigma$ magnitudes were sampled i.i.d from a $\mathcal{N}(0, .1)$ distribution.

\paragraph{Data generation}

We provide specifics of how we generated training data in the simulated example.   Neurons in half of the property space were fist selected to be silent in the training data for each individual. We did this by selecting neurons in an arbitrarily chosen half of property space to be silent for individual one (\Fig{syn_results}, upper plot) and then rotating the region of silent neurons by $90$ degrees clockwise for each subsequently generated individual.  

For each individual, we also simulated the activity of the non-silent neurons so that values of $l^s_t = \omega^s x_t^s$ fell within pseudo-randomly selected intervals of length 1 within the domain, $[-2, 2)$, of the shared function $f$, as shown for one randomly selected interval in the gray region of the plot of $f$ in \Fig{syn_results}a. These intervals were selected as follows.  For the first four individuals these intervals were set to $[-2, -1), [-1, 0), [0, 1), $ and  $[1, 2)$ to ensure that across the collection of data generated for all individuals the entire domain of $f$ was explored.  Intervals for subsequently generated individuals were selected by randomly selecting a leading edge from a $\text{Uniform}[-2, 1]$ distribution and then assigning the trailing edge so the length of each interval was 1. Having selected the interval that values of $l_s^t$ should fall in for any individual, values of $x_s^t$ were then generated as follows.  First, we selected target values of $l^s_t$ for each time point i.i.d. from a uniform distribution over the interval for each individual.  For each target value, we then formed a vector $x^s_{t, \text{base}} \in \mathbb{R}^{d^s_x}$ that 1) had zeros for all silent neurons, and such that 2) the portion of $x^s_{t, \text{base}}$ corresponding to the non-silent neurons was in the same direction as the portion of $\omega^s$ for the non-silent neurons and 3) $(\omega^s)^T x^s_{t, \text{base}}$ was equal to the target value.  We then randomly generated a vector, $x^s_{t, \text{noise}}$ by 1) sampling entries for the non-silent neurons i.i.d. from a $\mathcal{N}(0, 1)$ distribution and then projecting that generated vector onto the subspace orthogonal to $x^s_{t, \text{base}}$. We then formed $x^s_t = x^s_{t, \text{base}} + x^s_{t, \text{noise}}$.  

\paragraph{The form of fit CPD}

We provide the mathematical description of the form of the CPD we fit in the simulated example.  As specified in \Sect{syn_mdl_fitting}, we selected a CPD that factorized according to 
\begin{align*}
    \hat{p}_\gamma(\omega^s| M^s) &= \prod_{i=1}^{d^s_x} \mathcal{N}(\hat{\mu}_{\gamma_\mu}(M[i,:]), \hat{\sigma}_{\gamma_\sigma}(M[i,:])),
\end{align*}
where $\hat{\mu}_{\gamma_\mu}$ and $\hat{\sigma}_{\gamma_\sigma}$ were learned functions.  The function $\hat{\mu}_{\gamma_\mu}$ was a SHBF function, as described above in \Sect{SHBF_fcns}, where the hyperrectangles defining the basis functions were arranged to cover the unit square in a $100 \times 100$ pattern with hyperrectangles overlapping each other by fifty percent in each dimension.  The learned parameters, $\gamma_\mu$, were the coefficients for each hyperrectangle of the SHBF function. The function $\hat{\sigma}_{\gamma_\sigma}$ was defined as $\sigma_{\gamma_\sigma}(m) = e^{g_{\gamma_\sigma}(m)} + 10^{-6}$, where $g_{\gamma_\sigma}$ was another SHBF function with the hyperrectangles defining its basis functions arranged identically to those of the SHBF function for $\hat{\mu}_{\gamma_\mu}$.  The parameters $\gamma_\sigma$ were again the coefficients for each hyperrectangle of the SHBF function, $g_{\gamma_\sigma}$.

\paragraph{The form of the fit $p_\delta(\nu^s)$}

In the simulated example, we fit a prior over the noise standard deviation of the form $p_\delta(\nu^s) = \Gamma(g_\alpha(\delta_\alpha), g_\beta(\delta_\beta))$ where $\delta = \{\delta_\alpha, \delta_\beta \}$ were learnable parameters and $g_\alpha$ and $g_\beta$ were tanh function scaled and shifted so the shape parameter was bounded between $1$ and $10^{3}$ and the rate parameter was bounded between $10^{-1}$ and $10^4$.

\paragraph{The form of $\hat{f}_{\theta_\text{shared}}$}

The form of $\hat{f}_{\theta_\text{shared}}: \mathbb{R} \rightarrow \mathbb{R}$ we fit for the simulated example was a neural network specified as follows.  First, the input was scaled by $.001$ to improve numerical stability.  When comparing the fit $\hat{f}_{\theta_\text{shared}}$ to the true $f$, this scaling was accounted for (see section below on identifiability). The scaled input was then processed through a neural network with 2 hidden layers. We define the input of the neural network as $h_{0} \in \mathbb{R}$. The activity of the $i^{th}$ hidden layer, $h_i \in \mathbb{R}^{d_i}$, is defined recursively as
\begin{align*}
    h_i = [h_{i-1}, \text{ReLU}(W_{i-1} h_{i-1} + b_{i-1})],
\end{align*}
\noindent where $[\cdot,\cdot]$ represents concatenation, $\text{ReLU}$ is the rectified linear transfer function, $W_{i-1} \in \mathbb{R}^{d_{i-1}+r_g \times d_{i-1}}$ is a weight matrix with growth rate $r_g=10$, and  $b_{i-1} \in \mathbb{R}^2$ is a bias vector.  Output of the neural network is calculated from $h_{2} \in \mathbb{R}^{21}$ as $w_{2} h_{2} + b_{2}$, for a weight matrix $w_{2} \in \mathbb{R}^{1 \times 21}$ and bias $b_{2} \in \mathbb{R}$.  The parameters $\theta_\text{shared}$ are the weights and biases between all layers.  The concatenation of the input to the output of each hidden layer was inspired by DenseNets \cite{huang2017}, and we found in practice it aided model fitting, likely by addressing the problem of vanishing gradients that can occur with deep neural networks \cite{bengio1994}.

\paragraph{The form of the approximate posteriors}

In the simulated example, we used a Gaussian mean field approximation 
\begin{align*}
q^s_{\phi^s_\text{props}}(\omega^s) = \prod_{i=1}^{d_x^s}\mathcal{N}(\phi^s_{\text{props}, \mu, i}, g_\sigma(\phi^s_{\text{props}, \sigma, i})),
\end{align*}
for the approximate posterior over $\omega^s$ for each individual, where $\phi^s_{\text{props}, \mu, i}$ and $\phi^s_{\text{props}, \sigma, i}$ are learnable parameters determining the mean and standard deviation for $\omega[i]$.  We pass $\phi^s_{\text{props}, \sigma, i}$ through $g_\sigma$, a $\text{tanh}$ functions scaled and shifted so that standard deviations are bounded between $10^{-6}$ and $10$.

For each system instance $q_{\phi_\text{no-props}^s}(\nu^s)$ was a $\Gamma(g_{\alpha}(\phi^s_{\text{no-props}, \alpha}), g_{\beta}(\phi^s_{\text{no-props}, \beta}))$ distribution, where
$\phi_\text{no-props}^s = \{\phi^s_{\text{no-props}, \alpha}, \phi^s_{\text{no-props}, \beta} \}$
were learnable parameters determining the shape and rate parameters of the distribution, and $g_{\alpha}$ and $g_{\beta}$ were defined as above.

\paragraph{Additional fitting details}

We now provide the details of how synthesis was performed for the simulated example.  When fitting models to one individual at a time, fitting was identical, with the only difference being data for only one individual was used in the description below.  Following the strategy described in \Sect{DPMS}, we performed synthesis by using stochastic gradient ascent to maximize a sampled approximation of the ELBO.  In each iteration we first sampled 
\begin{align*}
    \omega^s_\text{sample} &\sim q_{\phi^s_\text{props}}(\omega^s) \\
    \nu^s_\text{sample} &\sim q_{\phi^s_\text{no-props}}(\nu^s),
\end{align*}
using the reparameterization trick \cite{kingma2014} and then took a gradient step to optimize the following objective
\begin{align}
 \sum_{s=1}^S  c^s\log p_{\theta_\text{shared}}(Y^s_\text{mb}|X^s_\text{mb}, \omega^s_\text{sample}, \nu^s_\text{sample})  - \text{KL}\left[q_{\phi^s_\text{props}}(\omega^s) || p_\gamma(\omega^s|M^s)\right]  
 - \text{KL}\left[q_{\phi^s_\text{no-props}}(\nu^s) || p_\delta(\nu^s)\right], \label{eq:elbo_simulated_ex}
\end{align}
where the derivatives for both KL terms were computed analytically.   Here $Y^s_\text{mb}$ and $X^s_\text{mb}$ are pseudorandomly selected mini-batches of data, sized so that all samples for an individual were processed every two iterations. The variables $c^s$ is the ratio of the total number of samples in the data for an individual divided by the number of samples in a mini-batch.  

We used constrained posterior initialization, as described in \Sect{initialization}, to initialize the values of  $\theta_\text{shared}$, $\gamma$, $\delta$, $\phi^s_\text{props}$ and $\phi^s_\text{no-props}$, while fixing the parameters of $\gamma$ determining variance, so that the CPD predicted a constant standard deviation of $.01$ for all properties.  Synthesis was then performed starting with the parameter values produced by the constrained posterior initialization. Constrained posterior initialization and synthesis were performed via stochastic gradient ascent with the Adam optimizer \cite{kingma2015}.   Gradient ascent for constrained posterior initialization was performed for 500 epochs with a fixed learning rate of $.01$ and values for the decay rates of the moment estimates for the Adam optimizer of $\beta_1 = .9$ and $\beta_2 = .999$ throughout.  Synthesis was performed for 3000 epochs, with decay rates of $\beta_1 = .9$ and $\beta_2 = .999$ throughout and starting with a learning rate of $.1$ that was decreased by a factor of $10$ every $1000$ epochs.
The initialization methods for parameters before starting constrained parameter initialization are listed in \Table{simulated_cpi_init_methods}.  Finally, a retrospective form of early stopping was performed by saving checkpoints every 100 epochs and then using the checkpoint with the best model performance on validation data.  Validation data, consisting of 1000 time points for each individual, was generated identically to the training data for each individual.  Model performance for early stopping was based on R-squared between true and predicted swim signals.  When predicting swim signals, the posterior means of the approximate posteriors were used as model parameters when predicting output. For synthesized models, the best checkpoint was determined by averaging the R-squared values for all individuals models were for synthesized for.  For models fit to individuals in isolation, the best checkpoint was selected on the R-squared for that individual alone.   

\begin{table}[h!]
    \begin{center}
        \begin{tabular}{ |c|c|} 
        \hline
         $\gamma$ & \thead{All entries of $\gamma_\mu$ set to 0 \\
         All entries of $\gamma_\sigma$ set so that the CPD predicted a standard deviation of .01 for all properties}\\
         \hline
         $\delta$ &\thead{$\delta_\alpha$ set so the shape parameter was 10 \\
         $\delta_\beta$ set so the rate parameter was 1000} \\
        \hline
        $\theta_\text{shared}$ & \thead{Entries of $W_i$ and $b_i$ initialized from a $\text{Uniform}[-\sqrt{1/d_i}, \sqrt{1/d_i}]$ distribution  } \\ 
        \hline
        $\phi^s_\text{props}$ &  \thead{Entries of $\phi^s_{\text{props}, \mu}$ initialized from a $\mathcal{N}(0, .01)$ distribution  \\
        Entries of $\phi^s_{\text{props}, \sigma}$ set so posterior standard deviations were $.01$ for all neurons} \\ 
        \hline
        $\phi^s_\text{no-props}$ & \thead{$\phi^s_{\text{no-props}, \alpha}$ set so the shape parameter was $10$ \\
        $\phi^s_{\text{no-props}, \beta}$ set so the rate parameter was $1$} \\ 
        \hline
        \end{tabular}
    \end{center}
    \caption{Methods of initializing parameter values before starting the constrained posterior initialization.  See subsections above for definition of broken out parameters in the right column.  The parameters $\phi^s_\text{props}$ are not listed because the approximate posteriors, $q_{\phi^s_\text{props}}(w^s)$, are constrained to be equal to the CPD during constrained posterior initialization.}
    \label{tbl:simulated_cpi_init_methods}
\end{table}

\paragraph{Accounting for non-identifiability when comparing synthesized models to ground truth}

The form of the models we fit in the simulated example have a fundamental non-identifiability because it is possible that for any model with projection weights $\omega^s$ and shared function $f$ to define a pair $(\omega^s)' = k\omega^s$ and $f'(l^s_t) = f\left(\frac{l^s_t}{k}\right)$ that will define the same input-output relationship.  This means the absolute scale of the weights we model with the CPD and approximate posteriors and the scale for the domain of $f$ cannot be learned from data.  Therefore, when visualizing $\hat{f}$, the estimated mean and standard deviation functions of the CPD as well as the mean of the approximate posteriors in \Fig{syn_results}, we estimated a $k$ to account for this and visualized the estimated entities with this learned scaling factor applied.  The value of $k$ was chosen as the scaling factor that minimized the squared error between the estimated and true means of the CPD, evaluated over a grid of points. When calculating this factor for CPDs learned from individuals in isolation, we compared only the portion of the true and estimated CPDs over the half of property space for the non-silent neurons. 

\paragraph{Calculating the ELBO on test data.}

We calculated the ELBO for the test data for an individual in the same way we calculated the ELBO for each value of $s$ in the sum above in \Eq{elbo_simulated_ex} with the only differences being that 1) $Y^s_\text{mb}$ and $X^s_\text{mb}$ were set to the entirety of the test data, 2) $c^s$ was set to 1, and 3)
to improve the accuracy of the approximated ELBO for model evaluation we used $1000$ samples from the approximate posteriors when approximating the expected log-likelihood for a model.

\subsection{Additional methods for the synthesis of regression models with neural data}
\label{sec:reg_real_data_details}

\paragraph{Data selection and partitioning into train, validation and test sets}

For this analysis, we used phototaxis data from 7 fish, corresponding to fish 1,2,5,6 (base fish), and 8, 10, and 11 (target fish) from \cite{chen2018}. These fish were selected because they were imaged as similar rates and were judged to have robust swimming responses during phototaxis. We note that the data made publicly available in \cite{chen2018} includes periods of phototaxis where shock was also delivered, and we omitted any periods with shock from our analysis. 

We divided the phototaxis data for a fish into train, validation and test sets as follows.  To roughly balance the swim vigor represented in each set, we partitioned the data for each fish into sequential \emph{chunks} five time points long and calculated the maximum at each point in time of the average of the swim signals on both channels. We then ordered the chunks by maximum swim vigor and iteratively assigned them to 42 different \emph{groups}, randomly assigning one of each of the first 42 chunks to each group and then repeating until there were no longer enough chunks left to assign to all 42 groups. For each fish, this produced 42 groupings exactly balanced in data quantity and roughly balanced in swim vigor.   To assign data to folds for three-fold cross validation, we assigned 14 groups each to train, validation and test sets, rotating which groups were assigned to each across folds. 

When performing synthesis with less than the full amount of test and validation data we simply used a smaller number of the groups assigned to the test and validation sets for a given fold.

\paragraph{The form of the fit CPD}

As specified in \sect{app_details_reg}, we used a CPD that factorized according to
\begin{align*}
    p_\gamma(\omega^s| M^s) = \prod_{i=1}^{d_x^s}\prod_{j=1}^{10} \mathcal{N}(\mu_{\gamma_{1,j}}(M[i,:]), \sigma_{\gamma_{2,j}}(M[i,:])),
\end{align*}
where $\mu_{\gamma_{1,j}}$ and $\sigma_{\gamma_{2,j}}$ were learned functions for each column of $\omega^s$.  These functions were identical in form to those described in \Sect{simulated_ex_details} except the hyperrectangles defining the basis functions of the SHBF functions were laid out differently. Because properties were the 3D position of neurons in the brain, we used a grid of 3D non-overlapping rectangles covering the brain of each individual.  The grid had 140 partitions along the anterior-posterior axis, 50 partitions along the left-right axis and 20 partitions along the dorsal-ventral axis. The size of grid and number of partitions per dimension was selected to produce enough 3D rectangles of roughly cubic shape to learn the CPD at fine resolution. 

\paragraph{The form of the fit $p_\delta(\nu^s)$}

We learned Gamma priors of the same form as in the simulated example over the noise standard deviations, $\nu^s \in \mathbb{R}_+^2$, for the swim signals.  Separate priors were learned for the left and right channels so $\delta = \{\delta_\alpha \in \mathbb{R}^2, \delta_\beta \in \mathbb{R}^2$ \}, where the separate entries of $\delta_\alpha$ and $\delta_\beta$ determining the shape and rate parameters of the Gamma distribution for the left and right channels.

\paragraph{The form of $\hat{f}_{\theta_\text{shared}}$}

Aside for differences in scaling applied to input, which was $.01$, and differences in input and output dimensionality, the form of $\hat{f}_{\theta_\text{shared}}: \mathbb{R}^{10} \rightarrow \mathbb{R}^2$ used in \Sect{syn_example} was the same as that described in \Sect{simulated_ex_details}. That is input was scaled by .01 and then passed through a network with
 2 hidden layers that concatenated their output to their input and again a growth rate of $r_g=10$, each followed by the application of a ReLU nonlinearity and a final linear projection down to two dimensions was applied. 

\paragraph{The form of the approximate posteriors}

We used a mean-field approximation for $q^s_{\phi_\text{props}}(\omega^s)$ for each system instance of the form
\begin{align*}
    q^s_{\phi_\text{props}}(\omega^s) = \prod_{j=1}^{10}\prod_{i=1}^{d_x^s} \mathcal{N}(\phi^s_{\text{props}, \mu, j, i}, g_\sigma(\phi^s_{\text{props}, \sigma, j, i})),
\end{align*}
where $\phi^s_{\text{props}, \mu, j, i}$ and $\phi^s_{\text{props}, \sigma, j, i}$ are learnable parameters determining the mean and standard deviation for $\omega[j,i]$, and $g_\sigma$ is defined as above for the simulated example. 

For each system instance $q_{\phi_\text{no-props}^s}(\nu^s)$ was defined as the product of two Gamma distributions, one for each channel, and each of the same form as the approximate posteriors over the noise standard deviation in the simulated example. Therefore $\phi^s_{\text{no-props}} = \{\phi^s_{\text{no-props}, \alpha} \in \mathbb{R}^2, \phi^s_{\text{no-props}, \beta} \in \mathbb{R}^2 \}$, where the separate entries of $\phi^s_{\text{no-props},  \alpha} $ and $\phi^s_{\text{no-props}, \beta}$ determine the shape and rate parameters of the distributions for each channel. 

\begin{table}[h]
    \begin{center}
        \begin{tabular}{ |c|c|} 
        \hline
         $\gamma$ & \thead{All entries of $\gamma_\mu$ set to 0 \\
         All entries of $\gamma_\sigma$ set so that the CPD predicted a standard deviation of .01 for all properties and dimensions $j$}\\
         \hline
         $\delta$ &\thead{Entries of $\delta_\alpha$ set so the shape parameter for each channel was 10 \\
         Entries of $\delta_\beta$ set so the rate parameter for each channel was 10} \\
        \hline
        $\theta_\text{shared}$ & \thead{Entries of $W_i$ and $b_i$ initialized from a $\text{Uniform}[-\sqrt{1/d_i}, \sqrt{1/d_i}]$ distribution  } \\
        \hline
        $\phi^s_\text{props}$ &  \thead{Entries of $\phi^s_{\text{props}, \mu, j, i}$ initialized from a $\mathcal{N}(0, .01)$ distribution for all neurons $i$ and dimensions $j$ \\
        Entries of $\phi^s_{\text{props}, \sigma, j, i}$ set so posterior standard deviations were $.01$ for all neurons $i$ and dimensions $j$} \\ 
        \hline
         $\phi^s_\text{no-props}$ & \thead{Entries of $\phi^s_{\text{no-props}, \alpha}$ set so the shape parameter for each channel was $10$  \\
        Entries of $\phi^s_{\text{no-props}, \beta}$ set so the rate parameter for each channel was $10$} \\ 
        \hline
        \end{tabular}
    \end{center}
    \caption{Methods of initializing parameter values before starting the constrained posterior initialization.  See above for definition of broken out parameters in the right column.  The parameters $\phi^s_\text{props}$ are not listed because the approximate posteriors, $q_{\phi^s_\text{props}}(w^s)$, are constrained to be equal to the CPD during constrained posterior initialization.}
    \label{tbl:real_reg_cpi_init_methods}
\end{table}

\paragraph{Additional fitting details}

Fitting, both when performing synthesis and when fitting to data for fish in isolation was performed in the same manner as described for the simulated example above with the exception of differences in initial values for starting the constrained posterior initialization (see \Table{real_reg_cpi_init_methods}) and learning rate schedules.  Constrained posterior initialization was performed for 500 epochs with a learning rate of $1\times 10^{-4}$, followed by 500 epochs with a learning rate of $1\times10^{-5}$. Synthesis was then performed for 10000 epochs with a learning rate of $1\times10^{-5}$, followed by 10000 more epochs with a learning rate of $1\times10^{-6}$.  During synthesis, checkpoints were saved every 500 epochs, and retroactive early-stopping was applied using validation data to prevent overfitting. 
This early stopping was performed exactly described in the fitting details for the simulated example, differing only in that we used the average R-squared across both outputs when measuring the performance of a model for an individual because that model predictions were now two-dimensional. 

\subsection{Additional methods for the synthesis of factor analysis models with neural data}
\label{sec:add_fa_methods}

\paragraph{Data selection}

For these analyses, we used data from 3 fish corresponding to fish 8, 9 and 11 from the original dataset \cite{chen2018}.  These fish were selected as they were imaged at similar rates and were judged to display robust OMR swimming responses. We omitted any data for the three included fish during periods in which shock was administered. We note that fish 10 in the original dataset also was imaged at a similar rate to those just mentioned but was recorded as turning in opposite directions from what would be expected to left and right OMR.  Upon further analysis, we believe this due to a simple mislabeling of directions in the released data. However, since the analysis was performed most naturally with just three fish (since we were examining three behaviors so that one behavior could be cleanly assigned to each fish), we still chose to omit fish 10 from this analysis.

\paragraph{The form of the fit CPD}

As discussed in \Sect{dr_app_details}, the CPD was of the form
\begin{align*}
    p_\gamma(\Lambda^s, \eta^s, \nu^s | M^s) = p_{\gamma_\Lambda}(\Lambda^s|M^s)p_{\gamma_\eta}(\eta^s|M^s)p_{\gamma_\nu}(\nu^s| M^s)
 \end{align*}
for the learnable parameters $\gamma = \{\gamma_\Lambda, \gamma_\eta, \gamma_\nu \}$ where $p_{\gamma_\Lambda}(\Lambda^s|M^s)$ was specified as 
\begin{align*}
    p_{\gamma_\Lambda}(\Lambda^s|M^s) = \prod_{i=1}^{d_x^s}\prod_{j=1}^{10} \mathcal{N}(\mu_{\gamma_{\Lambda, \mu, j}}(M[i,:]), \sigma_{\gamma_{\Lambda, \sigma, j}}(M[i,:])) 
\end{align*}
for the parameters $\gamma_\Lambda = \{\gamma_{\Lambda, \mu, j}, \gamma_{\Lambda, \sigma, j} \}_{j=1}^{10}$ and $\mu_{\gamma_{\Lambda,\mu,j}}$ and $\sigma_{\gamma_{\Lambda,\sigma,j}}$ are functions of the exact same form as those of the CPD described in \Sect{reg_real_data_details}. That is $\mu_{\gamma_{\Lambda,\mu,j}}$ was an SHBF function and $\sigma_{\gamma_{\Lambda,\sigma,j}}$ was a transformed version of an SHBF function with the hyperrectangular basis functions underlying the SHBF functions laid out on a $140\times50\times20$ non-overlapping grid. The component $p_{\gamma_\eta}(\eta^s|M^s)$ of the CPD was specified as 
\begin{align*}
    p_{\gamma_\eta}(\eta^s|M^s) = \prod_{i=1}^{d^s_x}\mathcal{N}(\mu_{\gamma_{\eta,\mu}}(M[i,:]), \sigma_{\gamma_{\eta,\sigma}}(M[i,:]))
\end{align*}
where $\gamma_\eta = \{\gamma_{\eta,\mu}, \gamma_{\eta,\sigma} \}$ are learnable parameters of functions that are again of the same form as those of the CPD described in \Sect{reg_real_data_details}.  Finally, $p_{\gamma_\nu}(\nu^s| M^s)$ was specified as 
\begin{align*}
    p_{\gamma_\nu}(\nu^s| M^s) = \prod_{i=1}^{d^s_x}\Gamma(g_{\alpha_{\gamma_{\nu,\alpha}}}(M[i,:])), g_{\beta_{\gamma_{\nu,\beta}}}(M[i,:])))
\end{align*}
where $\alpha_{\gamma_{\nu,\alpha}}$ and $\beta_{\gamma_{\nu,\beta}}$ were SHBF functions with hyperrectangular basis functions arranged in the same manner as those for the SHBF functions underlying the components of the CPD for $\xi$ and $\mu$ and $g_{\alpha_{\gamma_{\nu,\alpha}}}$ and $g_{\beta_{\gamma_{\nu,\beta}}}$ were scaled and shifted tanh functions as described above in \Sect{simulated_ex_details}.

\paragraph{The form of the approximate posteriors}

We now provide the mathematical form of $q_{\phi^s_\theta}(\xi^s, \mu^s, \sigma^s)$ and $q_{\phi_z^s}(Z^s)$.  We specified 
\begin{align*}
    q_{\phi^s_\theta}(\xi^s, \mu^s, \sigma^s) = q_{\phi^s_{\theta, \xi}}(\xi^s)q_{\phi^s_{\theta, \mu}}(\mu^s)q_{\phi^s_{\theta, \sigma}}(\sigma^s),
\end{align*}
for the parameters $\phi^s_\theta = \{\phi^s_{\theta, \xi}, \phi^s_{\theta, \mu}, \phi^s_{\theta, \sigma} \}$,
where 
\begin{align*}
    q^s_{\phi_{\theta, \xi}}(\xi^s) = \prod_{j=1}^{10}\prod_{i=1}^{d_x^s} \mathcal{N}(\phi^s_{\theta, \xi, \mu, j, i}, g_\sigma(\phi^s_{\theta, \xi, \sigma, j, i})),
\end{align*}
where $\phi^s_{\theta, \xi, \mu, j, i}$ and $\phi^s_{\theta, \xi, \sigma, j, i}$ are learnable parameters determining the mean and standard deviation for $\xi[j,i]$, and $g_\sigma$ is defined as above for the simulated example. We defined 
\begin{align*}
    q^s_{\phi_{\theta, \mu}}(\mu^s) = \prod_{i=1}^{d_x^s} \mathcal{N}(\phi^s_{\theta, \mu, \mu, i}, g_\sigma(\phi^s_{\theta, \mu, \sigma, i})),
\end{align*}
where $\phi^s_{\theta, \mu, i}$ and $\phi^s_{\theta, \sigma, i}$ are again learnable parameters determining the mean and standard deviation for $\mu[i]$, and $g_\sigma$ is defined as above for the simulated example.  We specified 
\begin{align*}
    q_{\phi^s_{\theta, \sigma}}(\sigma^s) = \prod_{i=1}^{d_x^s} \Gamma(g_\alpha(\phi^s_{\theta, \sigma, \alpha, i}), g_\beta(\phi^s_{\theta, \sigma, \beta, i})),
\end{align*}
for the functions $g_\alpha$ and $g_\beta$ defined as above in \Sect{simulated_ex_details}, so the parameters $\phi^s_{\theta, \sigma, \alpha, i}$ and $\phi^s_{\theta, \sigma, \beta, i}$ determine the shape and rate parameters of the Gamma distribution over the noise standard deviation for neuron $i$.

Finally, we specified 
\begin{align*}
    q_{\phi_z^s}(Z^s) = \prod_{i=1}^{n^s}\mathcal{N}(m_i^s, \Sigma^s),
\end{align*}
where we defined $\Sigma^s = H^s  (H^s)^T$, for $H^s \in \mathbb{R}^{10 \times 10}$, so the optimized parameters were $\phi_z^s = \{ \{m^s_i\}_{i=1}^{n^s}, H^s \}$. We optimized $H^s$ instead of $\Sigma$ directly because this allowed us to perform unconstrained optimization on $H^s$ while ensuring $\Sigma^s = H^s  (H^s)^T$ was symmetric positive semidefinite. 

\paragraph{Additional fitting details}

We now provide details of how synthesis was performed.  Synthesis was performed with stochastic gradient ascent.  In each iteration, optimization was performed on pseudo-randomly selected mini-batches of data, sized so that all training samples for an individual were processed every two iterations.  We denote the observed data for a mini-batch for subject $s$ as $Y^s_\text{mb}$ and the corresponding latent state as $Z^s_\text{mb}$. In each iteration, we first sampled 
\begin{align*}
    \xi^s_\text{sample} &\sim q_{\phi^s_{\theta, \xi}}(\xi^s) \\
    \mu^s_\text{sample} &\sim q_{\phi^s_{\theta, \mu}}(\mu^s) \\
    \sigma^s_\text{sample} &\sim q_{\phi^s_{\theta, \sigma}}(\sigma^s) \\
    Z^s_{\text{mb}, \text{sample}} &\sim q_{\phi_z^s}(Z^s_\text{mb}),
\end{align*}
using the reparameterization trick \cite{kingma2014}.
Denoting $\theta^s_\text{sample} := \{\xi^s_\text{sample}, \mu^s_\text{sample},  \sigma^s_\text{sample}\}$, we then took a gradient step to optimize the following objective 
\begin{align}
 \sum_{s=1}^S  c^s\log p(Y^s|Z^s_{\text{mb}, \text{sample}}, \theta^s_\text{sample})  - \text{KL}\left[q_{\phi^s_\theta}(\theta^s) || p_\gamma(\theta^s|M^s)\right] -  c^s\text{KL}\left[q_{\phi^s_z}(Z^s_\text{mb}) || p_\lambda(Z^s_\text{mb})\right], 
 \label{eq:sampled_elbo_dr}
\end{align}
where $c^s$ is ratio of the total number of samples in the training data for an individual divided by the number in a mini-batch.  Gradients for both KL terms were computed analytically.

The parameters $\gamma$, $\{\phi_\theta^s\}_{s=1}^S$ and $\{\phi^s_z\}_{s=1}^S$ were initialized with constrained posterior initialization. Constrained posterior initialization was performed as described in \Sect{initialization}, with two minor differences.  First, we tied together the approximate posteriors across individuals for $x^s$ and $\mu^s$ but not for $\sigma^s$.  That is, we enforced $q_{\phi^s_{\theta, \xi}}(\xi^s) = p_{\gamma_\xi}(\xi^s|M^s)$, $q_{\phi^s_{\theta, \mu}}(\mu^s) = p_{\gamma_\mu}(\mu^s|M^s)$ for each $s$ but we optimized $q_{\phi^s_{\theta, \sigma}}(\sigma^s)$ directly for each $s$ without any constraint.  Second, since there were now  approximate posteriors over latent state for each individual, $q_{\phi^s_z}(Z^s)$, these required initialization, and we optimized $\phi^s_z$ directly, without any constraints, during constrained posterior initialization.  The methods of initializing parameters for constrained posterior initialization are listed in \Table{fa_sp_init_details}. During constrained posterior initialization we fixed the parameters of $\gamma$ determining the standard deviation of the CPD over $\xi^s$ and $\mu^s$ so the CPD predicted a constant standard deviation of $.01$ for all properties for these model parameters. Synthesis was performed starting with the initial parameter values produced by constrained posterior initialization. Constrained posterior initialization and synthesis were performed via stochastic gradient ascent with the Adam optimizer \cite{kingma2015}. Gradient ascent for constrained posterior initialization was performed for 1000 epochs with a fixed learning rate of $.01$. Synthesis was performed for 2000 epochs,  starting with a learning rate of $.01$ that was decreased to $.001$ after $500$ epochs.  The decay rates of the moment estimates for the Adam optimizer were fixed at $\beta_1 = .9$ and $\beta_2 = .999$ throughout both constrained posterior initialization and synthesis. Finally, a retrospective form of early stopping was performed by saving checkpoints every 100 epochs and then using the checkpoint with the best model performance on validation data.  See below for details of how validation data was selected.  Model performance for early stopping was based on the average across subjects of the ELBO calculated for validation data.  See below for details on how the ELBO was calculated for validation and test data.

\begin{table}[h]
    \begin{center}
        \begin{tabular}{ |c|c|} 
        \hline
         $\gamma$ & \thead{All entries of $\gamma_{\xi, \mu, j}$ set to 0 for all dimensions $j$\\
         All entries of $\gamma_{\xi, \sigma, j}$ set so that the CPD predicted a constant value for the standard deviation of .01 \\
         for all properties and dimensions $j$ \\
         All entries of $\gamma_{\mu, \mu}$ set to 0\\
         All entries of $\gamma_{\mu, \sigma}$ set so that the CPD predicted a constant value for the standard deviation of .01 \\ for all properties \\
         All entries of $\gamma_{\sigma, \alpha}$ set so that the CPD predicted a constant shape parameter of 10 for all properties \\
         All entries of $\gamma_{\sigma, \beta}$ set so that the CPD predicted a constant rate parameter of 10 for all properties
         }\\
         \hline
         $\phi^s_\theta$ & \thead{ 
         All entries of $\phi^s_{\theta, \sigma, \alpha, i}$ set so shape parameters were $10$ for all neurons $i$ \\
         All entries of $\phi^s_{\theta, \sigma, \beta, i}$ set so rate parameters were $10$ for all neurons $i$
         } \\
         \hline
         $\phi^s_z$ & \thead{All $m^s_i$ set to 0 \\ 
         H initialized as the identity matrix} \\
         \hline 
        \end{tabular}
    \end{center}
    \caption{Methods of initializing parameter values before starting the constrained posterior initialization.  See above for definition of broken out parameters in the right column.  The parameters $\phi^s_{\theta, \xi}$ and $\phi^s_{\theta, \mu}$ are not listed because $q^s_{\phi_{\theta, \xi}}(\xi^s)$ and $q^s_{\phi_{\theta, \mu}}$ are constrained to be equal to the CPD during constrained posterior initialization.}
    \label{tbl:fa_sp_init_details}
\end{table}

\paragraph{Quantifying the ELBO on validation and test data}

We approximated the ELBO on validation and test data for a given subject by optimizing the objective specified in \eq{sampled_elbo_dr} with the following differences. First, since we calculated the ELBO for individual subjects, the sum in \eq{sampled_elbo_dr} was only over a single subject. Second, $Y_\text{mb}^s$ was the entire validation or testing data for an individual, and we set $c^s = 1$. Third, the objective was only optimized with respect to the parameters of the approximate posteriors over latent state and we held the CPD, $p_\gamma(\xi^s, \mu^s, \sigma^s | M^s)$, and approximate posteriors over FA model parameters, $q_{\phi^s_\theta}(\xi^s, \mu^s, \sigma^s)$, fixed.  This was done because we sought to estimate a lower-bound on the log-likelihood of the test data given the posteriors for FA model parameters learned from the training data.  Finally, to improve the accuracy of the approximated ELBO, we used $10$ samples from the approximate posteriors over FA parameters when approximating the expected log-likelihoods for each individual.

\paragraph{Additional details on the analysis of latent-state estimates}

We now provide additional details for the analysis examining latent state estimates obtained with DPMS when we observe only a single behavior in each fish.

We first estimated ground-truth latent-state that would be estimated with standard methods if it is possible observe all three behaviors in each fish.  We did this by fitting standard factor analysis models to all recorded OMR L, R and F conditions separately for fish 8, 9 and 11 from the original dataset \cite{chen2018}.  We used a latent space of 10-dimensions for each model to allow us to compare directly to results obtained with DPMS. Once a model was fit to the data for a fish, latent state was estimated using the mean of the posterior distribution over latent state. 

We then simulated being able to record data for only one condition for each fish by allowing access to only the recorded data for the OMR forward, left and right behaviors from fish 8, 9 and 11, respectively. The observed condition for each fish was selected arbitrarily, and we used all data for the given condition for each fish. We refer to the data for the selected condition for a fish as the ``designated data'' for that fish.  As we detail below, using the designated data for each fish, we then estimated latent state 1) with DPMS, and 2) by separately fitting standard FA models to the designated data for each fish and apply two different methods to put the latent-state estimated from the separate models in the same space. 

When applying DPMS, we 1) first split the designated data for each fish into train and validation data and then synthesized models with this data. We used the training data for model fitting and the validation data for early stopping. After models were synthesized, we 2) then estimated latent state for all data points in the designated data for each fish.   We used the trial-like structure of the original data to split the designated data for each fish into train and validation sets.  In particular, 
the original recordings were made under each of the distinct OMR conditions in multiple discrete periods, and we randomly used two of the periods for the designated condition for a fish as the validation data and the remaining periods of that condition as training data.  To estimate latent state over all data points in the second step, we optimized \eq{sampled_elbo_dr} for each fish with respect to the parameters of the approximate posteriors over latent state, holding the CPD and approximate posteriors over factor analysis model parameters fixed, while setting $Y^s$ to the neural activity for all designated data for a fish.   We then used the means of the posteriors over latent state estimated in this way as point estimates for latent state. 

When applying existing alternative methods, we first used sklearn \cite{scikit-learn} to fit standard FA models to the designated data for each fish independently, again fitting models with 10-dimensional latent spaces, and using the posterior means over latent state to obtain point estimates of latent state for each fish.  We then applied two alternative methods to place these estimates of latent state into the same space:

\begin{enumerate}
    \item{Orthonormalization:
    An important property of factor analysis is that it is possible to find a coordinate system in the latent space such that dimensions in this space map to orthogonal dimensions in the observed space and can be ordered by the amount of variance of observed data they explain.  In principle, ignoring a small set of degenerate models, this provides a means of defining a coordinate system for the latent space of factor analysis models that overcomes the non-identifiabilities inherent to factor analysis and is unique up to the sign of each axis.  A computational efficient ``orthonormalization procedure'' \cite{yu2008} can be applied to find such a latent coordinate system, and combined with a method of addressing the sign degeneracy, this could provide a means of transforming estimates of latent-state across subjects into comparable spaces.  When similar patterns of neural activity explain the same amount of variance across subjects, this method is likely to succeed, but when this assumption is not met, which is again likely when subjects are recorded performing different behaviors, this method may break down.  We demonstrate this by comparing against it.  Specifically, we applied the orthonormalization procedure to the models fit to each subject individually, and resolved the sign degeneracy by arbitrarily selecting one fish as the target fish and selecting the positive direction for each dimension of the latent space for each of the other fish as the direction that minimized the squared error between histograms of latent state projected along that dimension for that that fish and the target fish.}
    \item{Distribution Alignment: If distributions of latent state follow similar distributions across subjects, one approach of mapping latent state estimates across subjects to the same space might be to assume that the distributions of latent state are the same for each subject.  If this assumption is met, methods based
    on optimal transport can be used to find mappings between latent spaces underlying models fit to different subjects. Such approaches have been previously demonstrated with latent variable models fit to real neural data \cite{dyer2017, dabagia2022}.  However, when distributions of latent state are not similar across subjects, as might be the case when subjects display different patterns of neural activity as they perform different behaviors, these approaches may find solutions that incorrectly map latent state estimates from different subjects onto each other when they should in fact be represented in different parts of the latent space.  We emphasize this is not a critique of these approaches. They are designed for use when distributions over latent state are similar across subjects, and they should not be expected to perform well when this assumption is not met. Nonetheless, we feel is valuable to illustrate what can happen when the necessary assumptions are not met to illustrate the general type of problems that can arise and the potential of methods like DPMS, which use additional information such as neural properties, for enabling latent state to still be compared
    across subjects in these scenarios. To illustrate the general problem, we chose to compare against a basic linear form of distribution alignment that finds a linear mapping that maps between the empirical distributions of latent state estimated for two different subjects \cite{Courty2017}, as implemented in the python optimal transport library \cite{flamary2021pot}.  We apply it to three subjects at a time by arbitrarily denoting the distribution of posterior means for one subject as the ``target'' distribution and aligning the distributions over posterior means for the other two subjects to that.}
\end{enumerate}

Finally, we provide details on how we visualized the mappings from latent to neural activity spaces in \Fig{dr_lda}\emph{d}-\emph{f}.  We first sought to identify orthonormal unit vectors, $u_1, \ldots, u_{10}$, in the latent space that explained decreasing amounts of variance in modeled neural activity across the synthesized models for all subjects.  Estimating model parameters with their posterior means and denoting 
posterior mean of $\Lambda_s$ as $\hat{\Lambda}_s$, for any direction $u_i$ in the latent space, the amount of variance in neural activity explained by changes along $u_i$ is proportional to 
\begin{align*}
   \sum_{s=1}^3 ||\hat{\Lambda}_s u_i ||_2^2 =  \left| \left| \begin{bmatrix}\hat{\Lambda}_1 \\ \hat{\Lambda}_2 \\ \hat{\Lambda}_3  \end{bmatrix}u_i \right| \right|_2^2.
\end{align*}
From this it follows that the vectors $u_1, \ldots, u_{10}$ can be found as the right right singular vectors of $\begin{bmatrix}\hat{\Lambda}_1 \\ \hat{\Lambda}_2 \\ \hat{\Lambda}_3  \end{bmatrix}$.  Having identified these directions, we then asked if changes along each of these directions predicted similar changes in observed patterns of neural activity across fish. A change in modeled neural activity for fish $s$ due to a unit-length change along $u_i$ is $\Delta^i_s = \hat{\Lambda}_s u_i \in \mathbb{R}^{d_x^s}$. We visualized these predicted patterns of neural activity by
forming 3-d point clouds of the neuron positions for each fish, associating each point with the predicted change in activity in $\Delta^i_s$ for the corresponding neuron and taking max projections through the resulting volumes. The resulting maps are shown in \Fig{dr_lda}\emph{d}-\emph{f}. 

To illustrate that this consistent mapping between the latent and neural activity spaces was not a trivial consequence of the data, we sought to compare the patterns of neural activity explaining the most variance in the designated data for each fish.  We did this by identifying an orthogonal basis set of patterns explaining decreasing amounts of neural activity according to the FA models fit with standard methods independently to the designated data for each fish.  This basis set can be found through the same orthonormalization procedure previously applied for the purposes of aligning latent spaces.  Once we identified this set of patterns for each fish, we visualized them \Fig{dr_lda}\emph{a}-\emph{c} using the  max-projection procedure just described. 

\paragraph{Additional details on the quantification of model performance across behaviors}

We now provide additional details on how we quantified the ability of DPMS to synthesize factor analysis models with structure accounting for different behaviors observed in different animals. For this quantitative analysis, in addition to train and validation data, we also required test data.  We first broke the periods of neural data recorded under any of the three OMR behavioral conditions (forward, right, left) into chunks of 5 time points each for each fish. This was done so that when chunks were assigned to train, validation and test sets, the temporal dependence between training and validation and test data was reduced relative to assigning individual time points.
For each target fish, we used a paired six-fold cross-validation design. For each behavioral condition, equal-size disjoint sets of target-fish chunks were assigned to six test folds. The same held-out target-fish test chunks were used for the SB and DB scenarios. Under both the SB and DB scenarios the training data for a target fish was the same. However, under the SB scenario the training and validation data for the non-target fish represented the same behavior as the target fish, while the training and validation data represented the two different behaviors (one behavior for each non-target fish) under the DB scenario.  In all cases, the total amount of training and validation data was matched across SB and DB scenarios. Model fitting was performed as described above: models were synthesized using training data, with validation data used for retrospective early stopping. Early stopping was based only on the target-fish validation ELBO on held-out target-fish validation data, as described above.
We analyzed diagonal and off-diagonal train-test behavior pairs, corresponding to matched and mismatched train-test condition pairings respectively, separately. Condition comparisons were combined across all fish and folds, yielding 54 diagonal and 108 off-diagonal comparisons. We then applied exact two-sided sign-count tests to assess whether DB outperformed SB, with Holm correction across the two pair types.


\subsection{Constrained posterior initialization}
\label{sec:initialization}

We motivate and describe an important way of initializing the approximate posteriors and CPD when synthesizing models with DPMS. This initialization procedure can be particularly important when working with simplified, unimodal distributions for the approximate posteriors and CPD. 

A simple example of when careful initialization is required is when the magnitude but not the sign of certain parameters for the models for each system instance matter (formally, the sign is non-identifiable).  In this case, unimodal approximate posteriors, unable to represent the required bimodal distributions, may place most of their probability density around values of the correct magnitude but only a single and different sign across system instances. From \Thm{main}, the optimal CPD would be be the average of these posteriors.  However, if the form of the CPD is such that conditioned on one set of properties it too can only represent unimodal distributions, it will be unable to represent the optimal bimodal solution - instead likely taking on a form with a mean of zero and large variance.  This is not a very meaningful CPD.  An arguably better solution, given the constrained distributional forms in use, would be a solution where the approximate posteriors concentrate around values of the same sign for each example system.  A unimodal CPD could be better fit to the average of such posteriors, and this would be essentially equivalent to doing nothing more than having selected a convention up front to break the non-identifiability inherent in the model forms for each system instance being fit.  Similar concerns can arise when synthesizing models with other forms of non-identifiability in their parameterizations. 

We now describe how we can arrive at approximate posteriors that handle non-identifiabilities in similar ways.  One approach to this is to initialize the approximate posteriors in a manner so that similar solutions will be favored when optimizing the ELBO with gradient ascent.  For example, in the above example, we might initialize approximate posteriors to favor parameters of the same sign.  However, this general strategy will often be difficult in practice because the non-identifiabilities in complex models may not be immediately apparent nor may it be clear how they should be initialized to achieve the desired ends. An alternative solution, which we pursue in this work and refer to as constrained posterior initialization, is to learn good initializations for the approximate posteriors that handle non-identifiabilities in the same manner. Using the notation of \Sect{practical_concerns}, we do this by performing an initialization step where we solve a constrained synthesis problem where we tie the approximate posteriors for $\theta_\text{props}$ across system instances together by constraining them to be equal to the CPD.  This initialization step may provide reasonable starting points for explaining the data for each system instance and will, by design, produce initial approximate posteriors that handle non-identifiabilities in the same manner.  We perform this initialization by enforcing 
\begin{align*}
    q_{\phi^s_\text{props}}(\theta^s_\text{props}) = p_\gamma(\theta^s_\text{props}|M^s)
\end{align*}
for each system instance and then optimizing the ELBO as we would normally. Practically, this can be implemented by 1) using $p_\gamma(\theta^s_\text{props}|M^s)$ in place of $q_{\phi^s_\text{props}}(\theta^s_\text{props})$ in \Eq{ELBO_FULL}, 2) optimizing this modified objective as normal to learn initial values, $\gamma_\text{init}$, for the CPD, and then 3) setting $q_{\phi^s_\text{props}}(\theta^s_\text{props}) = p_{\gamma_\text{init}}(\theta^s_\text{props}|M^s)$ for each $s$. When performing this constrained synthesis problem, we optimize any $\theta_\text{shared}$, $\delta$ and $\phi^s_\text{no-props}$ parameters as normal. After doing this, synthesis is performed as normal using the set of starting with values of all parameters optimized in the initialization step.  As a final practical matter, we find it helpful to fix the parameters of the CPD determining variance during the initialization step so that estimates of variance do not become small.  When the CPD represents conditional Gaussians, this is achieved by just optimizing the parameters of the mean function, while leaving those for the variance function fixed.  We believe in practice this can be helpful because initializing with approximate posteriors with larger variances when performing DPMS allows larger regions of the parameter space to be initially explored when we are sampling from the approximate posteriors to approximate the ELBO.

\newpage
\section{Supplemental Figures}
\setcounter{figure}{0}

\begin{figure}[h]
\centering
\includegraphics[width=5.5in]{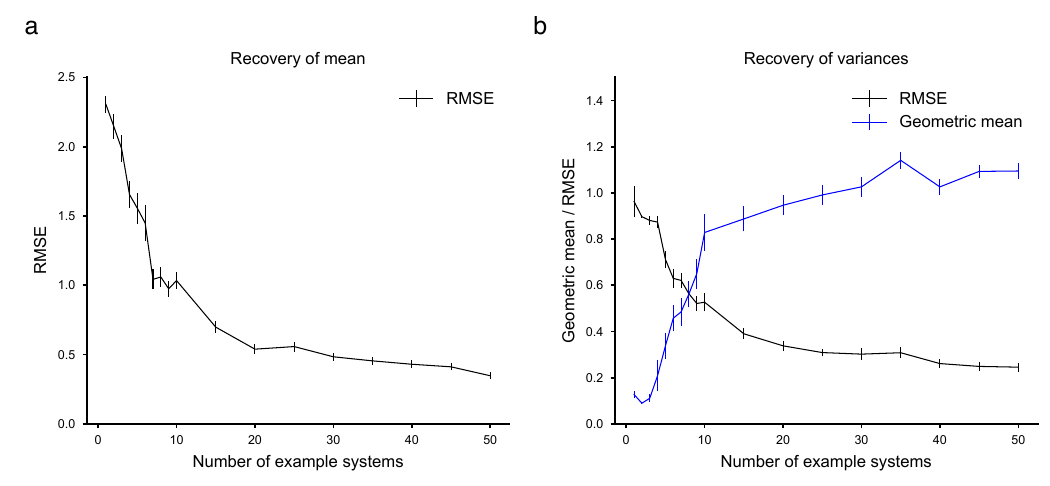}
\caption{\emph{Accuracy of the learned CPD for linear systems when only one sample is observed from each example system. } We examine the ability of DPMS to learn the CPD for linear systems when only one sample is observed from each example system.
To examine the impact of varying the number of system instances data is collected from, we perform a number of simulations, varying the number of system instances in each.  For simplicity, we assume all system instances have the same properties, so the CPD simplifies to a non-conditional prior.  This can also be understood as examining how the CPD is learned locally for one value of conditioning properties, so results here should be indicative of performance for the more general case. In each simulation, system instance $s$ generates an output $y^s \in \mathbb{R}$ given input, $x^s \in \mathbb{R}^5 \sim \mathcal{N}(0, I)$ according to $y^s = (\theta^s)^T x^s + r^s$, where $r^s \sim \mathcal{N}(0, 1)$.  Since only one sample is observed from each example system, we do not include a sample number index for $x^s, y^s$ or $r^s$.  Weights, $\theta^s$, for each system instance are generated from a $\mathcal{N}\left(\mu, I \right)$ prior, which represents the ground-truth CPD, where $\mu = [0, 1, 2, 3, 4]^T$.  We then apply DPMS to learn the CPD, which we assume is of the form $\mathcal{N}(\hat{\mu}, \hat{\Sigma})$, where we seek to learn $\hat{\mu}$ and $\hat{\Sigma}$, and where $\hat{\Sigma}$ is constrained to be diagonal.  We employ a general multivariate normal distribution (of the same form as $q_{\phi_z^s}(Z^s)$ specified in \sect{add_fa_methods}) for the approximate posterior, $q(\theta^s)$, for each example system, and for simplicity, we assume the variance of $r^s$ is known. When fitting, we perform 1500 gradient ascent iterations, which was long enough to ensure convergence of the ELBO in all simulations. To examine how performance varies with the number of system instances data is collected from, we vary the number of system instances in a simulation from 1 to 50.  We perform 30 independent simulations for each number of system instances.  For each simulation, we measure the accuracy of the learned CPD using root mean square error (RMSE) between the learned $\hat{\mu}$ and true $\mu$ and RMSE between the diagonal of the learned $\hat{\Sigma}$ and true covariance $I$.  We also calculate the geometric mean of the diagonal of $\hat{\Sigma}$.  The geometric mean is directly related to the determinant of $\hat{\Sigma}$, which is one means of quantifying the amount of uncertainty in the CPD. The geometric mean of the covariance matrix, $I$, for the true CPD is 1. (\emph{\textbf{a}}) RMSE between the true and learned mean of the CPD as the number of system instances samples are observed from increases.  Values and error bars indicate the mean and standard error across simulations. (\emph{\textbf{b}}) RMSE (black) between the diagonal of the learned and true covariance of the CPD and geometric mean (blue) of the learned covariance as the number of system instances samples are observed from increases.  }
\label{fig:low_data_synth}
\end{figure}

\begin{figure}[h]
\centering
\includegraphics[width=5.5in]{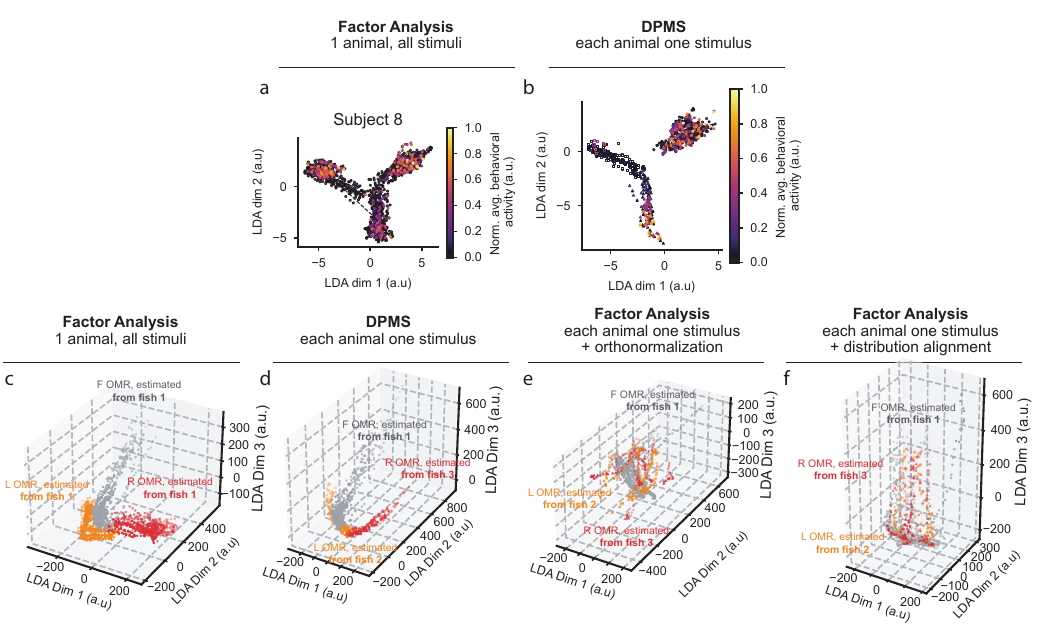}
\caption{\emph{Additional panels on dimensionality reduction} 
(\emph{\textbf{a}, \textbf{b}}) Plotting the average neural activity (normalized between 0-1) onto the latent states shows that moments of no swimming lie near the center of the latent space, whereas moments of swimming are away from the center for each cluster. (\emph{\textbf{c}}-\emph{\textbf{f}}) From left to right, in similar order as \Fig{low_data_synth}\emph{\textbf{e}}-\emph{\textbf{h}}, latent state estimated with model synthesis, applying orthonormalization to latent state estimated across fish with standard factor analysis, and applying DA to latent state estimated across fish with standard factor analysis, each when data from only a single behavior is observed in each fish. Latent state is shown in the best three-dimensional spaces for differentiating behavior for each approach, and coordinate axes have been reflected and rotated to visually correspond to those in panel \emph{a}.}
\label{fig:low_data_synth_supp}
\end{figure}

\begin{figure}[h]
\begin{center}
\includegraphics[width=4in]{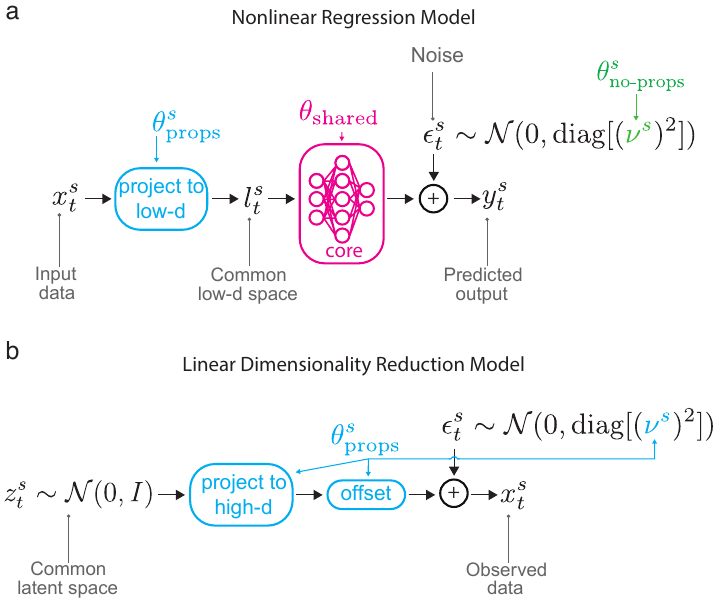}
\caption{\emph{Example model forms used in this work.} (\emph{\textbf{a}}) A model for non-linear regression, where high-dimensional input data is projected into a common low-dimensional space, shared across system instances. Once data is projected to the common space, a ``core'' module predicts the output of the system. The core module is shared across system instances, and so captures the interesting, non-linear behavior of the system under study in a way that generalizes across systems instances. The goal of model synthesis is to learn to predict the projection weights to the common space for each system instance from measurable properties. (\emph{\textbf{b}}) A model for linear dimensionality reduction. Latent variables reside in a common space that is shared across system instances, enabling latent variables for different system instances to be examined in the same space. The goal of model synthesis is to predict the coefficients of the linear mapping between latent variables to the observed activity of each system instance as well as the standard deviation of the noise for each observed variable from measurable properties. In both panels, colors correspond to the type of parameters for the different parts of each model. As illustrated, models do not necessarily need to incorporate $\theta_\text{shared}$ or $\theta^s_\text{no-props}$ parameters.}
\label{fig:models}
\end{center}
\end{figure}

`


\end{document}